\documentclass{article} 
\usepackage{iclr2021_conference,times}
\usepackage{color}
\usepackage{bm}
\usepackage{hyperref}
\usepackage{amsmath,amsfonts}
\usepackage{blindtext}
\usepackage{listings}
\usepackage{mathtools}

\newif\ifdebug
\debugtrue

\newcommand{\Lip}{\mathrm{Lip}}

\DeclarePairedDelimiterX{\infdivx}[2]{(}{)}{%
	#1\;\delimsize\|\;#2%
}

\newcommand{\vect}[1]{\boldsymbol{\mathbf{#1}}}

\newcommand{\E}{\mathbb{E}}
\newcommand{\R}{\mathbb{R}}






\newcommand{\uv}{\vect u}
\newcommand{\vv}{\vect v}
\newcommand{\wv}{\vect w}
\newcommand{\xv}{\vect x}
\newcommand{\yv}{\vect y}
\newcommand{\zv}{\vect z}

\newcommand{\Bc}{\mathcal B}

\newcommand{\Dc}{\mathcal D}

\newcommand{\Fc}{\mathcal F}

\newcommand{\Ic}{\mathcal I}

\newcommand{\Lc}{\mathcal L}

\newcommand{\Nc}{\mathcal N}
\newcommand{\Oc}{\mathcal O}
\newcommand{\Pc}{\mathcal P}

\newcommand{\Rc}{\mathcal R}

\newcommand{\Eb}{\mathbb E}

\newcommand{\Pb}{\mathbb P}

\newcommand{\Rb}{\mathbb R}








	

\usepackage{hyperref}
\usepackage{url}
\usepackage{graphicx}
\usepackage{subfigure}
\usepackage{amsmath,amsfonts,amsthm,amssymb,mathtools}
\usepackage{booktabs} 
\usepackage{multirow}
\usepackage{physics}
\usepackage[ruled,vlined]{algorithm2e}
\usepackage{wrapfig}

\usepackage{tikz}
\usetikzlibrary{shapes.geometric, arrows}
\usetikzlibrary{decorations.pathreplacing}
\usetikzlibrary{fit}
\usetikzlibrary{calc}
\usetikzlibrary{backgrounds}
\usetikzlibrary{patterns}
\usetikzlibrary{positioning,arrows.meta,bending}

\tikzstyle{mathtext}=[text badly centered, font={\fontsize{12pt}{12pt}\selectfont}]
\tikzstyle{arrow} = [thick,->]

\newtheorem{theorem}{Theorem}
\newtheorem{proposition}{Proposition}
\newtheorem{corollary}{Corollary}
\newtheorem{lemma}{Lemma}
\theoremstyle{definition}
\newtheorem{definition}{Definition}
\newtheorem*{remark}{Remark}

\newcommand{\z}{\vect z}
\newcommand{\x}{\vect x}

\title{Implicit Normalizing Flows}


\author{Cheng Lu$^\dag$, Jianfei Chen$^\dag$, Chongxuan Li$^\dag$, Qiuhao Wang$^\ddag$, Jun Zhu$^\dag$\thanks{Corresponding Author.} \\
$^\dag$Dept. of Comp. Sci. \& Tech., Institute for AI, BNRist Center \\
$^\dag$Tsinghua-Bosch Joint ML Center, THBI Lab,Tsinghua University, Beijing, 100084 China \\
$^\ddag$Center for Data Science, Peking University, Beijing, 100871 China \\\texttt{\{lucheng.lc15,chris.jianfei.chen,chongxuanli1991\}@gmail.com, } \\
\texttt{dcszj@tsinghua.edu.cn}, \texttt{wqh19@pku.edu.cn} \\
}

%

\iclrfinalcopy 
\begin{document}

\maketitle
 
\begin{abstract}
Normalizing flows define a probability distribution by an explicit invertible transformation $\boldsymbol{\mathbf{z}}=f(\boldsymbol{\mathbf{x}})$. In this work, we present implicit normalizing flows (ImpFlows), which generalize normalizing flows by allowing the mapping to be implicitly defined by the roots of an equation $F(\boldsymbol{\mathbf{z}}, \boldsymbol{\mathbf{x}})= \boldsymbol{\mathbf{0}}$. ImpFlows build on residual flows (ResFlows) with a proper balance between expressiveness and tractability. 
Through theoretical analysis, we show that the function space of ImpFlow is strictly richer than that of ResFlows. Furthermore, for any ResFlow with a fixed number of blocks, there exists some function that ResFlow has a non-negligible approximation error. However, the function is exactly representable by a single-block ImpFlow. We propose a scalable algorithm to train and draw samples from ImpFlows. Empirically, we evaluate ImpFlow on several classification and density modeling tasks, and ImpFlow outperforms ResFlow with a comparable amount of parameters on all the benchmarks.
\end{abstract}

\section{Introduction}
Normalizing flows (NFs)~\citep{rezende2015variational,nice} are promising methods for density modeling. NFs define a model distribution $p_{\xv}(\xv)$ by specifying an invertible transformation $f(\xv)$ from $\xv$ to another random variable $\zv$. By change-of-variable formula, the model density is
\begin{align}\label{eqn:nf}
    \ln p_{\xv}(\xv) = \ln p_{\z}(f(\x)) + \ln\left|\det(J_f(\x))\right|, 
\end{align}
where $p_{\z}(\zv)$ follows a simple distribution, such as Gaussian. 
NFs are particularly attractive due to their tractability, i.e., the model density $p_{\x}(\xv)$ can be directly evaluated as Eqn.~(\ref{eqn:nf}). 
To achieve such tractability, NF models should satisfy two requirements: (i) the mapping between $\xv$ and $\zv$ is invertible; (ii) the log-determinant of the Jacobian $J_f(\x)$ is tractable. Searching for rich model families that satisfy these tractability constraints is crucial for the advance of normalizing flow research. For the second requirement, earlier works such as inverse autoregressive flow~\citep{kingma2016improved} and RealNVP~\citep{realnvp} restrict the model family to those with triangular Jacobian matrices. 

More recently, there emerge some free-form Jacobian approaches, such as Residual Flows (ResFlows)~\citep{behrmann2019invertible,chen2019residual}. They relax the triangular Jacobian constraint by utilizing a stochastic estimator of the log-determinant, enriching the model family. However, the Lipschitz constant of each transformation block is constrained for invertibility. In general, this is not preferable because mapping a simple prior distribution to a potentially complex data distribution may require a transformation with a very large Lipschitz constant (See Fig.~\ref{fig:checker_density} for a 2D example).
Moreover, all the aforementioned methods assume that there exists an \emph{explicit} forward mapping $\zv=f(\xv)$. Bijections with explicit forward mapping only covers a fraction of the broad class of invertible functions suggested by the first requirement, which may limit the model capacity. 

In this paper, we propose \emph{implicit flows} (ImpFlows) to generalize NFs, allowing the transformation to be \emph{implicitly} defined by an equation $F(\zv, \xv)= \vect 0$. Given $\xv$ (or $\zv$), the other variable can be computed by an implicit root-finding procedure $\zv=\mathrm{RootFind}(F(\cdot,\xv))$. 
An explicit mapping $\zv=f(\xv)$ used in prior NFs can viewed as a special case of ImpFlows in the form of $F(\zv, \xv)=f(\xv)-\zv= \vect 0$. 
To balance between expressiveness and tractability, we present a specific from of ImpFlows,
where each block is the composition of a ResFlow block and \emph{the inverse of} another ResFlow block.
We theoretically study the model capacity of ResFlows and ImpFlows in the function space. We show that the function family of single-block ImpFlows is strictly richer than that of two-block ResFlows by relaxing the Lipschitz constraints. 
Furthermore, for any ResFlow with a fixed number of blocks, there exists some invertible function that ResFlow has non-negligible approximation error, but ImpFlow can exactly model.

On the practical side, we develop a scalable algorithm to estimate the probability density and its gradients, and draw samples from ImpFlows. The algorithm leverages the implicit differentiation formula. Despite being more powerful, the gradient computation of ImpFlow is mostly similar with that of ResFlows, except some additional overhead on root finding. We test the effectiveness of ImpFlow on several classification and generative modeling tasks. ImpFlow outperforms ResFlow on all the benchmarks, with comparable model sizes and computational cost.

\section{Related Work}


\noindent\textbf{Expressive Normalizing Flows} There are many works focusing on improving the capacity of NFs. For example, \cite{nice,realnvp,kingma2018glow,ho2019flow++,song2019mintnet,hoogeboom2019emerging,de2020block,durkan2019neural} design dedicated model architectures with tractable Jacobian. More recently, \cite{grathwohl2018ffjord,behrmann2019invertible,chen2019residual} propose NFs with free-form Jacobian, which approximate the determinant with stochastic estimators. In parallel with architecture design, \cite{chen2020vflow,huang2020augmented,cornish2019relaxing,nielsen2020survae} improve the capacity of NFs by operating in a higher-dimensional space. As mentioned in the introduction, all these existing works adopt \emph{explicit} forward mappings, which is only a subset of the broad class of invertible functions. In contrast, the implicit function family we consider is richer. While we primarily discuss the implicit generalization of ResFlows~\citep{chen2019residual} in this paper, the general idea of utilizing implicit invertible functions could be potentially applied to other models as well. Finally,~\citet{zhang2020approximation} formally prove that the model capacity of ResFlows is restricted by the dimension of the residual blocks.
In comparison,
we study another limitation of ResFlows in terms of the bounded Lipschitz constant, and compare the function family of ResFlows and ImpFlows with a comparable depth.

\noindent\textbf{Continuous Time Flows}  (CTFs)~\citep{chen2018neural,grathwohl2018ffjord,chen2018continuous} are flexible alternative to discrete time flows for generative modeling. They typically treat the invertible transformation as a dynamical system, which is approximately simulated by ordinary differential equation (ODE) solvers. In contrast, the implicit function family considered in this paper does not contain differential equations, and only requires fixed point solvers. Moreover, the theoretical guarantee is different. While CTFs typically study the universal approximation capacity under the continuous time case (i.e., ``\emph{infinite depth}'' limit), we consider the model capacity of ImpFlows and ResFlows under a \emph{finite} number of transformation steps. Finally, while CTFs are flexible, their learning is challenging due to instability~\citep{liu2020does,massaroli2020stable} and exceedingly many ODE solver steps~\citep{finlay2020train}, making their large-scale application still an open problem.


\noindent\textbf{Implicit Deep Learning}
Utilizing implicit functions enhances the flexibility of neural networks, enabling the design of network layers in a problem-specific way. For instance, \cite{bai2019deep} propose a deep equilibrium model as a compact replacement of recurrent networks; \cite{amos2017optnet} generalize each layer to solve an optimization problem; \cite{wang2019satnet} integrate logical reasoning into neural networks; \cite{reshniak2019robust} utilize the implicit Euler method to improve the stability of both forward and backward processes for residual blocks; and \cite{sitzmann2020implicit} incorporate periodic functions for representation learning. Different from these works, which consider implicit functions as a replacement to feed-forward networks, we develop \emph{invertible} implicit functions for normalizing flows, discuss the conditions of the existence of such functions, and theoretically study the model capacity of our proposed ImpFlow in the function space.

\section{Implicit Normalizing Flows}
We now present implicit normalizing flows, by starting with a brief overview of existing work. 

\subsection{Normalizing Flows}
\label{sec:bg_nf}

As shown in Eqn.~(\ref{eqn:nf}), a normalizing flow $f: \xv \mapsto \zv$ is an invertible function that defines a probability distribution with the change-of-variable formula.
The modeling capacity of normalizing flows depends on the expressiveness of the invertible function $f$. \textit{Residual flows} (ResFlows)~\citep{chen2019residual,behrmann2019invertible} are a particular powerful class of NFs due to their free-form Jacobian. ResFlows use $f=f_L\circ \cdots\circ f_1$ to construct the invertible mapping, where each layer $f_l$ is an invertible residual network with Lipschitz constraints bounded by a fixed constant $\kappa$:
\begin{equation}
    f_{l}(\xv)=\xv+g_{l}(\xv),\quad \Lip(g_l)\leq\kappa < 1,
\end{equation}
where $\Lip(g)$ is the Lipschitz constant of a function $g$ (see Sec.~\ref{sec:lip_func} for details). Despite their free-form Jacobian, the model capacity of ResFlows is still limited by the Lipschitz constant of the invertible function. The Lipschitz constant of each ResFlow block $f_l$ cannot exceed 2~\citep{behrmann2019invertible}, so the Lipschitz constant of an $L$-block ResFlow cannot exceed $2^L$. 
However, to transfer a simple prior distribution to a potentially complex data distribution, the Lipschitz constant of the transformation can be required to be sufficiently large in general.
Therefore, ResFlows can be undesirably deep simply to meet the Lipschitz constraints (see Fig.~\ref{fig:checker_density} for a 2D example). Below, we present \emph{implicit flows} (ImpFlows) to relax the Lipschitz constraints.







\subsection{Model Specification}

In general, an
\textit{implicit flow} (ImpFlow) is defined as an invertible mapping between random variables $\xv$ and $\zv$ of dimension $d$ by finding the roots of $F(\zv, \xv)= \vect 0$, where $F$ is a 
function from $\Rb^{2d}$ to $\Rb^d$.
In particular, the explicit mappings $\zv=f(\xv)$ used in prior flow instances~\citep{chen2019residual, kingma2018glow} can be expressed as an implicit function in the form $F(\zv, \xv)=f(\xv)-\zv = \vect 0$. While ImpFlows are a powerful family to explore, generally they are not guaranteed to satisfy the invertibility and the tractability of the log-determinant as required by NFs.
In this paper, we focus on the following specific form, which achieves a good balance between expressiveness and tractability, and leave other possibilities for future studies.

\begin{definition}\label{dfn:implicit-flow}
Let $g_{\z}:\mathbb{R}^d\rightarrow \mathbb{R}^d$ and $g_{\x}:\mathbb{R}^d\rightarrow \mathbb{R}^d$ be two functions such that $\Lip(g_{\x})<1$ and $\Lip(g_{\z})<1$, where $\Lip(g)$ is the Lipschitz constant of a function $g$. A specific form of ImpFlows is defined by
\begin{equation}
\label{equ:imp_func}
    F(\z,\x)=\vect 0,\ \text{where}\ F(\z,\x)=g_{\x}(\x)-g_{\z}(\z)+\x-\z.
\end{equation}
\end{definition}


The root pairs of Eqn.~\eqref{equ:imp_func} form a subset in $\mathbb{R}^d \times \mathbb{R}^d$, which actually defines the
assignment rule of a unique invertible function $f$. To see this, for any $\xv_0$, 
according to Definition~\ref{dfn:implicit-flow}, we can construct a contraction $h_{\xv_0}(\zv) = F(\zv, \xv_0) + \zv$ with a unique fixed point, which corresponds to a unique root (w.r.t. $\zv$) of $F(\zv, \xv_0) = \vect 0$, denoted by $f(\xv_0)$. 
Similarly, in the reverse process, given a $\zv_0$, 
the root (w.r.t. $\xv$) of $F(\zv_0, \xv) = \vect0$ also exists and is unique, denoted by $f^{-1} (\zv_0)$. These two properties are sufficient to ensure the existence and the invertibility of $f$, as summarized in the following theorem.

\begin{theorem}
\label{thrm:imp_exist}
    Eqn.\eqref{equ:imp_func} defines a unique mapping $f:\Rb^d\rightarrow\Rb^d, \z=f(\x)$, and $f$ is invertible.
    \end{theorem}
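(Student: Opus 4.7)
The plan is to apply Banach's fixed-point theorem twice: once to get $f$, once to get a candidate inverse $\tilde f$, and then match them by a uniqueness argument. First, I would fix an arbitrary $\xv_0\in\Rb^d$ and rewrite $F(\zv,\xv_0)=\vect 0$ as the fixed-point equation $\zv = h_{\xv_0}(\zv)$ with $h_{\xv_0}(\zv) := \xv_0 + g_{\xv}(\xv_0) - g_{\zv}(\zv)$, exactly as in the paragraph following Definition~\ref{dfn:implicit-flow}. For any $\zv_1,\zv_2\in\Rb^d$ we have $\|h_{\xv_0}(\zv_1)-h_{\xv_0}(\zv_2)\| = \|g_{\zv}(\zv_1)-g_{\zv}(\zv_2)\| \le \Lip(g_{\zv})\,\|\zv_1-\zv_2\|$, and since $\Lip(g_{\zv})<1$ by hypothesis, $h_{\xv_0}$ is a strict contraction on the complete metric space $(\Rb^d,\|\cdot\|)$. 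Banach's fixed-point theorem then delivers a unique $\zv\in\Rb^d$ satisfying the equation, which I name $f(\xv_0)$. Doing this for every $\xv_0$ defines the map $f:\Rb^d\to\Rb^d$ unambiguously.

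Next, by the mirror argument with the roles of $\xv$ and $\zv$ swapped, I would fix $\zv_0\in\Rb^d$, rewrite $F(\zv_0,\xv)=\vect 0$ as $\xv = k_{\zv_0}(\xv)$ with $k_{\zv_0}(\xv) := \zv_0 + g_{\zv}(\zv_0) - g_{\xv}(\xv)$, and use $\Lip(g_{\xv})<1$ to conclude that $k_{\zv_0}$ is also a strict contraction on $\Rb^d$. Banach's theorem produces a unique root, which I call $\tilde f(\zv_0)$, yielding a well-defined map $\tilde f:\Rb^d\to\Rb^d$.

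Finally, I would show $\tilde f = f^{-1}$ purely from these two uniqueness statements. For any $\xv_0$, set $\zv_0 := f(\xv_0)$; by construction $F(\zv_0,\xv_0)=\vect 0$, so $\xv_0$ is a root of $F(\zv_0,\cdot)=\vect 0$, and by the uniqueness of that root we get $\tilde f(\zv_0)=\xv_0$, i.e.\ $\tilde f\circ f = \mathrm{id}$. Running the same argument in the opposite direction with $\xv_0 := \tilde f(\zv_0)$ and invoking uniqueness of the root of $F(\cdot,\xv_0)=\vect 0$ gives $f\circ \tilde f = \mathrm{id}$. Hence $f$ is a bijection with inverse $\tilde f$.

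I do not anticipate any real obstacle: the contractivity estimates follow immediately from the Lipschitz hypotheses, and completeness of $\Rb^d$ in the Euclidean norm is standard, so Banach's theorem applies without qualification. The only mildly non-mechanical step is the last one, where one must resist the temptation to invert $f$ by an explicit formula and instead use uniqueness on both sides to identify $\tilde f$ with $f^{-1}$.
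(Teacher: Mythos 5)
Your proof is correct and follows essentially the same route as the paper: construct the contraction $h_{\xv_0}$, invoke Banach's fixed-point theorem to define $f$, repeat with the roles of $\xv$ and $\zv$ swapped to get the candidate inverse, and identify it with $f^{-1}$ by the two uniqueness statements. The only cosmetic difference is that you name Banach's theorem and the completeness of $\Rb^d$ explicitly, while the paper leaves these implicit.
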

See proof in Appendix~\ref{appendix:thrm1}. Theorem~\ref{thrm:imp_exist} characterizes the validness of the ImpFlows introduced in Definition~\ref{dfn:implicit-flow}. In fact, a single ImpFlow is a stack of a single ResFlow and the inverse of another single ResFlow, which will be formally stated in Sec~\ref{sec:theory}. We will investigate the expressiveness of the function family of the ImpFlows in Sec~\ref{sec:theory}, and present a scalable algorithm to learn a deep generative model built upon ImpFlows in Sec.~\ref{sec:algorithm}.

\begin{figure}[t]
\centering
	\begin{minipage}{0.42\linewidth}
		\centering
\begin{tikzpicture}[node distance=1.0cm]
    \node(R)[mathtext] {$\mathcal{R}$};
    \node(subset1)[mathtext, right of=R] {$\subsetneqq$};
    \node(F)[mathtext, right of=subset1] {$\mathcal{F}$};
    \node(Lemma1)[mathtext, above of=subset1, yshift=-0.4cm] {\scriptsize \textbf{Lemma~\ref{lemma:inf}}};

    \node(R2)[mathtext, below of=R, yshift=-1.0cm] {$\mathcal{R}_2$};
    \node(subset2)[mathtext, right of=R2] {$\subsetneqq$};
    \node(I)[mathtext, below of=F, yshift=-1.0cm] {$\mathcal{I}$};
    \node(Cor1)[mathtext, above of=subset2, yshift=-0.4cm] {\scriptsize \textbf{Corollary~\ref{cor:R2subset}}};
    
    \draw [arrow] (R) -> (R2);
    \draw [arrow] (F) -> (I);

    \node(Def)[mathtext, below of=R, xshift=-1.0cm, yshift=0.cm] {\scriptsize \textbf{Equation~(\ref{equ:Rl})}};
    \node(composition1)[mathtext, below of=Def, yshift=0.75cm] {\scriptsize (2-composition)};
    \node(Theorem)[mathtext, below of=F, xshift=1.0cm, yshift=0.cm] {\scriptsize \textbf{Theorem~\ref{thrm:imp_functions}}};
    \node(composition2)[mathtext, below of=Theorem, yshift=0.75cm] {\scriptsize (2-composition)};
\end{tikzpicture}\\
    (a) Relationship between $\Rc_2$ and $\Ic$.
	\end{minipage}
		\begin{minipage}{0.4\linewidth}
		\centering
\vskip 0.3in
	\includegraphics[width=0.9\linewidth]{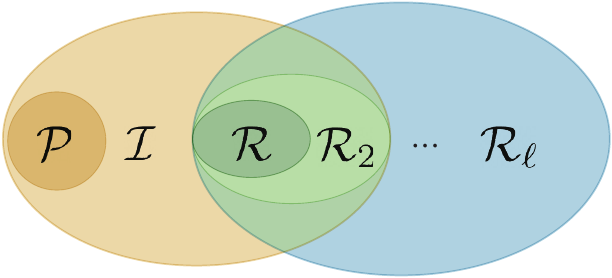}\\
        (b) Relationship between $\Rc_{\ell}$ and $\Ic$.
		\end{minipage}
\caption{An illustration of our main theoretical results on the expressiveness power of ImpFlows and  ResFlows. Panel (a) and Panel (b) correspond to results in Sec.~\ref{sec:model_cap} and Sec.~\ref{sec:com_multi} respectively.
\label{fig:relationship}}
\vspace{-.2cm}
\end{figure}

\section{Expressiveness Power}
\label{sec:theory}

We first present some preliminaries on Lipschitz continuous functions in Sec.~\ref{sec:lip_func} and then formally study the expressiveness power of ImpFlows, especially in comparison to ResFlows.
In particular, we prove that the function space of ImpFlows is strictly richer than that of ResFlows in Sec.~\ref{sec:model_cap} (see an illustration in Fig.~\ref{fig:relationship} (a)). Furthermore, for any ResFlow with a fixed number of blocks, there exists some function that ResFlow has a non-negligible approximation error. However, the function is exactly representable by a single-block ImpFlow. The results are illustrated in Fig.~\ref{fig:relationship} (b) and formally presented in Sec.~\ref{sec:com_multi}.

\subsection{Lipschitz Continuous Functions}

\label{sec:lip_func}
For any differentiable function $f: \Rb^d\rightarrow \Rb^d$ and any $\x\in \Rb^d$, we denote the Jacobian matrix of $f$ at $\xv$ as $J_f(\xv) \in \Rb^{d\times d}$. 

\begin{definition}
A function
$\Rb^d\rightarrow \Rb^d$
is called \textit{Lipschitz continuous} if there exists a constant $L$, s.t.
\begin{align*}
    \|f(\xv_1)-f(\xv_2)\|\leq L\|\xv_1-\xv_2\|,\ \forall \xv_1,\xv_2\in\Rb^d.
\end{align*}
The smallest $L$ that satisfies the inequality is called the \textit{Lipschitz constant} of $f$, denoted as $\Lip(f)$.
\end{definition}
Generally, the definition of $\Lip(f)$ depends on the choice of the norm $||\cdot||$, while we use $L_2$-norm by default in this paper for simplicity.

\begin{definition}
A function 
$\Rb^d\rightarrow \Rb^d$
is called \textit{bi-Lipschitz continuous} if it is Lipschitz continuous and has an inverse mapping $f^{-1}$ which is also Lipschitz continuous.
\end{definition}
It is useful to consider an equivalent definition of the Lipschitz constant in our following analysis. 
\begin{proposition}
\label{lemma:lip_jacob}
(Rademacher (\cite{federer1969grundlehren}, Theorem 3.1.6)) If $f: \Rb^d\rightarrow \Rb^d$ is Lipschitz continuous, then $f$ is differentiable almost everywhere, and
\begin{equation*}
    \Lip(f)=\sup_{\x\in\Rb^d}\|J_f(\xv)\|_2,
\end{equation*}
where $\|M\|_2=\sup_{\{\vv:\|\vv\|_2=1\}}\|M\vv\|_2$ is the operator norm of the matrix $M\in \Rb^{d\times d}$.
\end{proposition}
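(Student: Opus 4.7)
The plan is to prove the proposition in two stages: (i) that any Lipschitz map $f:\Rb^d\to\Rb^d$ is differentiable almost everywhere (the hard part, Rademacher's theorem), and (ii) the norm identity $\Lip(f)=\sup_{\xv}\|J_f(\xv)\|_2$ given (i). It is enough to treat scalar-valued $f$ throughout, since both differentiability a.e.\ and the operator-norm bound reduce componentwise (with the Lipschitz constant of each coordinate dominated by $\Lip(f)$).

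For stage (i), I would follow the classical Federer route. Fix a direction $\vv\in\Rb^d$ with $\|\vv\|_2=1$. Because $f$ is Lipschitz, its restriction to almost every line parallel to $\vv$ is Lipschitz on $\Rb$, hence absolutely continuous, hence differentiable at $\mathcal L^1$-almost every point of that line. Fubini then gives that the directional derivative $\partial_{\vv}f$ exists at $\mathcal L^d$-almost every point of $\Rb^d$. The delicate step is to upgrade ``directional derivatives exist in every direction at a.e.\ point'' to full Fr\'echet differentiability at a.e.\ point. One fixes a countable dense family of directions $\{\vv_k\}$, intersects the full-measure sets on which each $\partial_{\vv_k}f$ exists, and then uses the Lebesgue differentiation theorem together with the Lipschitz bound $|\partial_{\vv}f(\xv)-\partial_{\vv'}f(\xv)|\le \Lip(f)\|\vv-\vv'\|_2$ to show that $\vv\mapsto\partial_{\vv}f(\xv)$ is linear a.e. That linear map is then the Jacobian $J_f(\xv)$.

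For stage (ii) I would establish two inequalities. For the upper bound, at any point $\xv$ where $f$ is differentiable, for any unit vector $\vv$,
\begin{equation*}
\|J_f(\xv)\vv\|_2 = \lim_{t\to 0}\frac{\|f(\xv+t\vv)-f(\xv)\|_2}{|t|}\leq \Lip(f),
\end{equation*}
so $\|J_f(\xv)\|_2\leq \Lip(f)$ for a.e.\ $\xv$. For the lower bound, fix arbitrary $\xv_1,\xv_2\in\Rb^d$ and write $\gamma(t)=\xv_1+t(\xv_2-\xv_1)$. By Fubini and the absolute continuity of $f$ along a.e.\ line established in stage (i), for a.e.\ pair of endpoints the composition $f\circ\gamma$ is absolutely continuous and the fundamental theorem of calculus yields
\begin{equation*}
f(\xv_2)-f(\xv_1)=\int_0^1 J_f(\gamma(t))(\xv_2-\xv_1)\,dt.
\end{equation*}
Taking norms gives $\|f(\xv_2)-f(\xv_1)\|_2\leq \sup_{\xv}\|J_f(\xv)\|_2\,\|\xv_2-\xv_1\|_2$, and continuity of $f$ extends this to all $\xv_1,\xv_2$, hence $\Lip(f)\leq\sup_{\xv}\|J_f(\xv)\|_2$.

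The main obstacle is stage (i): the measure-theoretic upgrade from almost-everywhere directional differentiability in each fixed direction to Fr\'echet differentiability at almost every point, which is precisely the content of Rademacher's theorem. Stage (ii) is then essentially a fundamental-theorem-of-calculus computation along segments. Since both parts are classical, the cleanest exposition is simply to cite Federer's Theorem 3.1.6, as the proposition already does.
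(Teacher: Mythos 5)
The paper offers no proof of this proposition: it is cited directly from Federer's Theorem 3.1.6, which is also what you conclude is the cleanest exposition. Your two-stage sketch — the standard Fubini-plus-dense-directions route to Rademacher's theorem in stage (i), and the directional-derivative upper bound together with the fundamental-theorem-of-calculus lower bound along segments in stage (ii) — is the classical argument and is sound as actually executed.

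One imprecision worth flagging in your opening remark: it is \emph{not} enough to treat scalar-valued $f$ throughout, only for the a.e.\ differentiability part. Knowing $\|\nabla f_i(\xv)\|_2\le\Lip(f_i)\le\Lip(f)$ for each coordinate only bounds the row norms of $J_f(\xv)$, and the maximum row norm is a \emph{lower} bound on the operator norm $\|J_f(\xv)\|_2$, not an upper bound (a Frobenius-type estimate would cost an extra $\sqrt d$). Fortunately your stage (ii) never actually uses this reduction — the upper bound $\|J_f(\xv)\vv\|_2\le\Lip(f)$ is derived for the full vector-valued map, and the identity $f(\xv_2)-f(\xv_1)=\int_0^1 J_f(\gamma(t))(\xv_2-\xv_1)\,dt$ is likewise vector-valued — so the argument is unaffected, but the "reduce componentwise" justification as stated does not cover the norm identity.
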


\begin{figure}[t]
	\centering
	\begin{minipage}{0.23\linewidth}
		\centering
			\includegraphics[width=\linewidth]{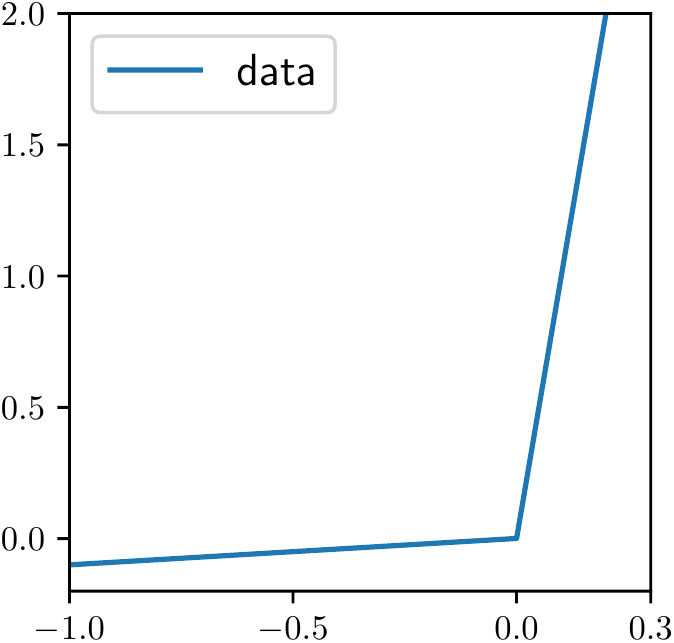}\\
\small{(a) Target function\\}
	\end{minipage}
	\begin{minipage}{0.23\linewidth}
		\centering
			\includegraphics[width=\linewidth]{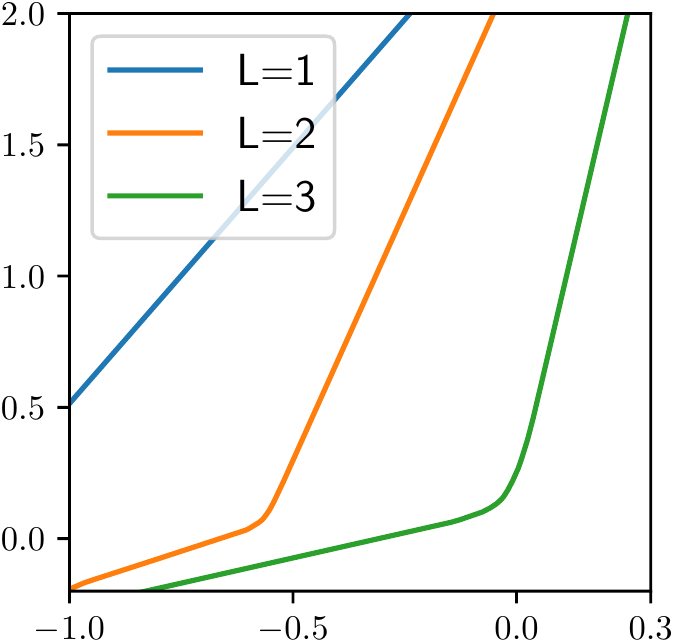}\\
\small{(b) ResFlow\\}
	\end{minipage}
	\begin{minipage}{0.23\linewidth}
		\centering
			\includegraphics[width=\linewidth]{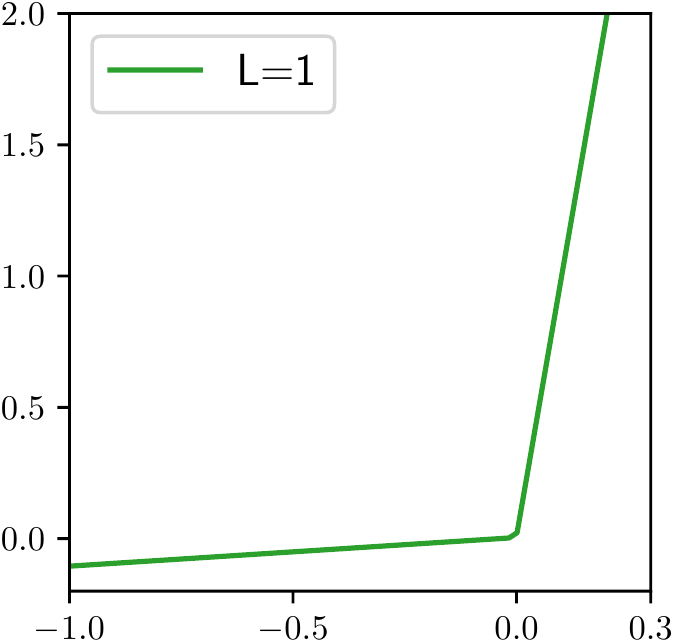}\\
\small{(c) ImpFlow\\}
	\end{minipage}
	\begin{minipage}{0.23\linewidth}
		\centering
			\includegraphics[width=\linewidth]{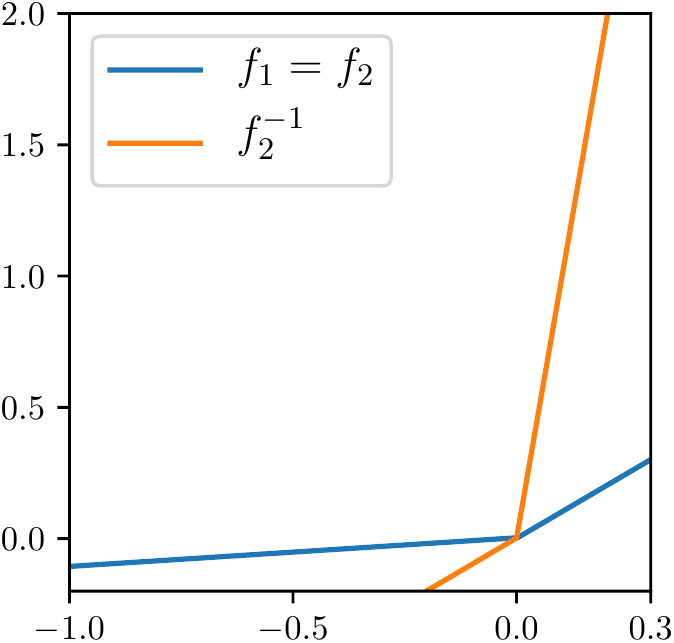}\\
\small{(d) Composition\\}
	\end{minipage}
	\caption{A 1-D motivating example. (a) Plot of the target function. (b) Results of fitting the target function using ResFlows with different number of blocks. All functions have non-negligible approximation error due to the Lipschtiz constraint. (c) An ImpFlow that can exactly represent the target function. (d) A visualization of compositing a ResFlow block and the inverse of another ResFlow block to construct an ImpFlow block.
	The detailed settings can be found in Appendix~\ref{appendix:network}. \label{fig:1D_example}}
	\vspace{-.2cm}
\end{figure}

\subsection{Comparison to two-block ResFlows}
\label{sec:model_cap}




We formally compare the expressive power of a single-block ImpFlow and a two-block ResFlow. We highlight the structure of the theoretical results in this subsection in Fig.~\ref{fig:relationship} (a) and present a 1D motivating example in Fig.~\ref{fig:1D_example}. All the proofs can be found in Appendix.~\ref{appendix:proof}.

On the one hand, according to the definition of ResFlow, the function family of the single-block ResFlow is
\begin{align}
\label{eqn:single_resflow}
\Rc&\coloneqq \{ f: f = g + \mathrm{Id},\ g\in C^1(\Rb^d,\Rb^d), \Lip(g)<1\},
\end{align}
where $C^1(\Rb^d,\Rb^d)$ consists of all functions from $\Rb^d$ to $\Rb^d$ with continuous derivatives and $\mathrm{Id}$ denotes the identity map. Besides, the function family of $\ell$-block ResFlows is defined by composition:
\begin{equation}
\label{equ:Rl}
    \Rc_{\ell}\coloneqq\{f: f=f_{\ell}\circ\cdots\circ f_1 \text{ for some }f_1,\cdots,f_{\ell}\in\Rc\}.
\end{equation}
By definition of Eqn.~\eqref{eqn:single_resflow} and Eqn.~\eqref{equ:Rl}, $\Rc_1=\Rc$.

On the other hand, according to the definition of the ImpFlow in Eqn.~\eqref{equ:imp_func}, we can obtain
$
    (g_{\x}+\mathrm{Id})(\x)=g_{\x}(\x)+\x=g_{\z}(\z)+\z=(g_{\z}+\mathrm{Id})(\z),
$
where $\circ$ denotes the composition of functions. Equivalently, we have
$
    \z=\left((g_{\z}+\mathrm{Id})^{-1}\circ(g_{\x}+\mathrm{Id})\right)(\x),
$
which implies the function family of the single-block ImpFlow is
\begin{align}
\Ic= \{ f: f=f^{-1}_2\circ f_1\text{ for some }f_1,f_2\in\Rc\}.
\end{align}
Intuitively, a single-block ImpFlow can be interpreted as the composition of a ResFlow block and the inverse function of another ResFlow block, which may not have an explicit form (see Fig.~\ref{fig:1D_example} (c) and (d) for a 1D example).
Therefore, it is natural to investigate the relationship between $\Ic$ and $\Rc_2$. Before that, we first introduce a family of ``monotonically increasing functions'' that does not have an explicit Lipschitz constraint, and show that it is strictly larger than $\Rc$.
\begin{lemma}
\label{lemma:inf}
\begin{equation}
\Rc\subsetneqq \Fc\coloneqq\{f \in \Dc : \inf_{\x\in\Rb^d,\vv\in\Rb^d, \|\vv\|_2=1}\vv^T J_{f}(\x)\vv>0\},
\end{equation}
where $\Dc$ is the set of all bi-Lipschitz $C^1$-diffeomorphisms from $\Rb^d$ to $\Rb^d$, and $A \subsetneqq B$ means $A$ is a proper subset of $B$.
\end{lemma}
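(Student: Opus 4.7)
The plan is to establish the lemma in two steps: first that $\mathcal{R}\subseteq \mathcal{F}$, then that the inclusion is strict via an explicit counterexample.

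For the inclusion $\mathcal{R}\subseteq \mathcal{F}$, take any $f = g + \mathrm{Id}$ with $g\in C^1$ and $\kappa := \Lip(g) < 1$. I need to check two things: (i) $f$ is a bi-Lipschitz $C^1$-diffeomorphism, i.e., $f\in\mathcal{D}$; and (ii) the infimum condition holds. For (i), $f$ is clearly $C^1$ with $\Lip(f)\leq 1+\kappa$. Invertibility and a Lipschitz inverse follow from the standard Banach contraction argument used in ResFlows: for each $\yv$, the map $\xv\mapsto \yv - g(\xv)$ is a $\kappa$-contraction, yielding a unique fixed point, and $\Lip(f^{-1})\leq 1/(1-\kappa)$. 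The inverse is $C^1$ by the inverse function theorem, since (ii) below implies $J_f(\xv)$ is nonsingular everywhere. For (ii), $J_f(\xv) = I + J_g(\xv)$, and by Proposition~\ref{lemma:lip_jacob} we have $\|J_g(\xv)\|_2\leq \kappa$, so for any unit vector $\vv$,
\begin{equation*}
\vv^T J_f(\xv)\vv \;=\; 1 + \vv^T J_g(\xv)\vv \;\geq\; 1 - \|J_g(\xv)\|_2 \;\geq\; 1-\kappa \;>\;0,
\end{equation*}
uniformly in $\xv$ and $\vv$, so $f\in\mathcal{F}$.

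For the strict inclusion, I will exhibit an $f\in\mathcal{F}\setminus\mathcal{R}$. Take $f(\xv) = 3\xv$ on $\Rb^d$. This is a $C^1$-diffeomorphism with $\Lip(f)=3$ and $\Lip(f^{-1})=1/3$, and $\vv^T J_f(\xv)\vv = 3 > 0$ for every unit $\vv$, so $f\in\mathcal{F}$. To see $f\notin\mathcal{R}$, note that the decomposition $f = g+\mathrm{Id}$ uniquely forces $g(\xv) = f(\xv) - \xv = 2\xv$, which has $\Lip(g)=2 \not<1$; hence no representation of $f$ as an element of $\mathcal{R}$ exists.

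The two steps together yield $\mathcal{R}\subsetneqq\mathcal{F}$. The only mildly delicate point is the first step: I must verify both that $f\in\mathcal{D}$ (global invertibility, Lipschitzness of $f^{-1}$, and $C^1$ regularity of $f^{-1}$) and the uniform lower bound on $\vv^T J_f\vv$. Both are routine consequences of the contraction mapping theorem, the inverse function theorem, and Proposition~\ref{lemma:lip_jacob}, so no real obstacle arises; the counterexample is elementary because the decomposition $g = f - \mathrm{Id}$ is unique, making the Lipschitz constraint on $g$ entirely determined by $f$.
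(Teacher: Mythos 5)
Your proof is correct and follows the same overall strategy as the paper: show $\Rc\subset\Fc$ by bounding $\vv^T J_f(\xv)\vv$ from below, then exhibit a scaling map to show the inclusion is strict. The one genuine technical difference is how you obtain the lower bound. You write $\vv^T J_f(\xv)\vv = 1 + \vv^T J_g(\xv)\vv$ and apply Cauchy--Schwarz plus the operator-norm bound $\|J_g(\xv)\|_2\leq\kappa$ to get the explicit uniform constant $1-\kappa>0$ in one line. The paper instead starts from $\sup_{\xv}\|J_f(\xv)-I\|_2^2<1$, expands the squared operator norm into a quadratic form $\vv^T J_f^T J_f\vv - 2\vv^T J_f\vv<0$, and then discards the nonnegative first term using nonsingularity of $J_f$ to conclude $\inf\vv^T J_f\vv>0$. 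Your route is shorter, avoids the detour through $J_f^T J_f$, and yields a cleaner quantitative lower bound; the paper's route needs an extra remark about nonsingularity but mirrors the spectral-normalization viewpoint used elsewhere in the paper. Two cosmetic differences: you spell out the contraction-mapping and inverse-function-theorem argument for $\Rc\subset\Dc$, whereas the paper simply cites Lemma 2 of Behrmann et al.; and for strictness you fix $f(\xv)=3\xv$ and invoke the uniqueness of the decomposition $g=f-\mathrm{Id}$ to pin down $\Lip(g)=2$, while the paper uses $f(\xv)=m\xv$ with $m$ arbitrary to emphasize that the Lipschitz constant of members of $\Fc$ is unbounded. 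Both counterexample arguments are sound.
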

Note that it follows from \citet[Lemma 2]{behrmann2019invertible} that all functions in $\Rc$ are bi-Lipschitz, so $\Rc\subsetneqq\Dc$. In the 1D input case, we can get $\Rc=\{f\in C^1(\Rb):\inf_{x\in\Rb}f'(x)>0,\sup_{x\in\Rb}f'(x)<2\}$, and $\Fc=\{f\in C^1(\Rb):\inf_{x\in\Rb}f'(x)>0\}$. In the high dimensional cases, $\Rc$ and $\Fc$ are hard to illustrate. Nevertheless, the Lipschitz constants of the functions in $\Rc$ is less than $2$~\citep{behrmann2019invertible}, but those of the functions in $\Fc$ can be arbitrarily large.
Based on Lemma~\ref{lemma:inf}, we prove that the function family of  ImpFlows $\Ic$ consists of the compositions of two functions in $\Fc$, and therefore is a strictly larger than $\Rc_2$, as summarized in the following theorem.
\begin{theorem}
\label{thrm:imp_functions}
(Equivalent form of the function family of a single-block ImpFlow).
    \begin{equation}
        \Ic=\Fc_2\coloneqq\{f: f=f_2\circ f_1\ \mathrm{for}\ \mathrm{some}\ f_1,f_2\in\Fc\}.
    \end{equation}
\end{theorem}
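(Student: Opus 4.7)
The plan is to establish the set equality $\Ic = \Fc_2$ via two inclusions, with the reverse inclusion doing the heavier lifting.

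For $\Ic \subseteq \Fc_2$, take $f = f_2^{-1}\circ f_1$ with $f_1, f_2 \in \Rc$. By Lemma~\ref{lemma:inf}, $f_1 \in \Fc$, so it suffices to show $f_2^{-1}\in\Fc$. Writing $f_2 = g_2 + \mathrm{Id}$ with $\Lip(g_2) < 1$, we have $f_2 \in \Dc$ by \citet[Lemma~2]{behrmann2019invertible}, and $\Dc$ is closed under inversion, so $f_2^{-1} \in \Dc$. For the field-of-values condition, I would fix a unit $\vv$ and $\yv = f_2(\xv)$, set $\uv \coloneqq J_{f_2}(\xv)^{-1}\vv$, and use the identity $\vv^\top J_{f_2^{-1}}(\yv)\vv = \uv^\top J_{f_2}(\xv)\uv$ (the equality $\wv^\top B^\top\wv = \wv^\top B\wv$ holds for any real $B$). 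The right-hand side is at least $(1-\Lip(g_2))\|\uv\|^2$, and $\|\uv\|\geq 1/\|J_{f_2}(\xv)\|_2 \geq 1/(1+\Lip(g_2))$, yielding the uniform positive lower bound $(1-\Lip(g_2))/(1+\Lip(g_2))^2$.

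For $\Fc_2 \subseteq \Ic$, take $f = f_2 \circ f_1$ with $f_1, f_2 \in \Fc$. The key idea is an explicit common-scaling decomposition: I would pick a scalar $\alpha > 0$ and set $g(\xv) \coloneqq \alpha f_1(\xv)$, $h(\yv) \coloneqq \alpha f_2^{-1}(\yv)$. A direct calculation gives $h^{-1}(\zv) = f_2(\zv/\alpha)$, so $(h^{-1}\circ g)(\xv) = f_2(f_1(\xv)) = f(\xv)$. To force $g \in \Rc$, I would estimate, for every unit $\vv$, $\|(\alpha J_{f_1}(\xv) - I)\vv\|^2 = \alpha^2\|J_{f_1}(\xv)\vv\|^2 - 2\alpha \vv^\top J_{f_1}(\xv)\vv + 1 \leq \alpha^2 L_1^2 - 2\alpha c_1 + 1$, where $L_1 \coloneqq \Lip(f_1) < \infty$ and $c_1 \coloneqq \inf_{\xv, \|\vv\|=1}\vv^\top J_{f_1}(\xv)\vv > 0$ (both available since $f_1 \in \Fc \subseteq \Dc$). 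Choosing $\alpha \in (0, 2c_1/L_1^2)$ pushes this strictly below $1$. The analogous bound for $h$ uses that $f_2^{-1}\in\Fc$ (which follows from the Jacobian argument in the first inclusion, now applied to $f_2 \in \Fc$ instead of $\Rc$), and yields a second open interval for $\alpha$; any choice in the intersection gives $g, h \in \Rc$.

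The main obstacle is the reverse inclusion, because $\Fc$ is \emph{not} closed under composition: two planar rotations by small angles can compose into a rotation of angle exceeding $\pi/2$, at which the field-of-values condition in the definition of $\Fc$ fails. This rules out any shortcut that first collapses $f_2 \circ f_1$ into a single element of $\Fc$. The common-scaling trick $g = \alpha f_1$, $h = \alpha f_2^{-1}$ sidesteps the issue by pulling $f_1$ and $f_2^{-1}$ simultaneously into $\Rc$ through a single scalar $\alpha$, whose existence depends crucially on the uniform positivity in the definition of $\Fc$ together with the finite Lipschitz constants inherited from $\Dc$.
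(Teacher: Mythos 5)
Your proposal is correct and follows essentially the same route as the paper's own proof. Both directions use the same scaling construction $g=\alpha f_1$, $g_2 = f_2(\cdot/\alpha)$ (you write the latter as $h^{-1}$ with $h=\alpha f_2^{-1}$, which is the identical map), and the same inverse-function-theorem identity $\vv^\top J_{f^{-1}}\vv = \uv^\top J_f\uv$ with $\uv=J_f^{-1}\vv$ to transport the uniform positivity of the quadratic form across inversion; the paper merely packages these computations into separate intermediate lemmas.
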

Note that the identity mapping $\mathrm{Id}\in\Fc$, and it is easy to get $\Fc\subset\Ic$. 
Thus, the Lipschitz constant of a single ImpFlow (and its reverse) can be arbitrarily large. Because $\Rc\subsetneqq\Fc$ and there exists some functions in $\Ic\setminus\Rc_2$ (see a constructed example in Sec.~\ref{sec:com_multi}), we can get the following corollary.
\begin{corollary}
\label{cor:R2subset}
$\Rc\subsetneqq \Rc_2\subsetneqq \Fc_2=\Ic$.
\end{corollary}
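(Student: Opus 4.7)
\textbf{Proof plan for Corollary~\ref{cor:R2subset}.} The corollary decomposes into three claims: $\Rc \subsetneqq \Rc_2$, $\Rc_2 \subsetneqq \Fc_2$, and $\Fc_2 = \Ic$. The last equality is exactly Theorem~\ref{thrm:imp_functions}, so I would only need to establish the two strict inclusions. The overarching strategy is to exploit the quantitative Lipschitz bound $\Lip(f) < 2$ enjoyed by every $f \in \Rc$ (since $\Lip(f) \le 1 + \Lip(g) < 2$), combined with the fact that $\Fc$ contains functions of arbitrarily large Lipschitz constant (as remarked after Lemma~\ref{lemma:inf}).

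For $\Rc \subsetneqq \Rc_2$, I would first argue the (non-strict) inclusion by noting that $\mathrm{Id} \in \Rc$, taking $g \equiv 0$, so for any $f \in \Rc$ we have $f = f \circ \mathrm{Id} \in \Rc_2$. For strictness, I would construct an element of $\Rc_2$ whose Lipschitz constant is at least $2$; any such function is excluded from $\Rc$ by the bound above. A concrete example is $f_1(\xv) = f_2(\xv) = \tfrac{3}{2}\xv$, which belongs to $\Rc$ because the residual $g_i(\xv) = \tfrac{1}{2}\xv$ has $\Lip(g_i) = \tfrac{1}{2} < 1$, while $f_2 \circ f_1(\xv) = \tfrac{9}{4}\xv$ has Lipschitz constant $\tfrac{9}{4} > 2$. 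Proposition~\ref{lemma:lip_jacob} then certifies $f_2 \circ f_1 \notin \Rc$.

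For $\Rc_2 \subsetneqq \Fc_2$, the inclusion $\Rc_2 \subseteq \Fc_2$ follows immediately from Lemma~\ref{lemma:inf} ($\Rc \subseteq \Fc$) by composing two such maps, so the only thing to verify is strictness. Here I would use that every $f \in \Rc_2$ satisfies $\Lip(f) \le \Lip(f_2)\Lip(f_1) < 4$, and exhibit an element of $\Fc_2$ with Lipschitz constant at least $4$. For instance, $f(\xv) = 5\xv$ has $J_f = 5\Iv$ so $\vv^\top J_f \vv = 5 > 0$ uniformly, hence $f \in \Fc$; then $f = f \circ \mathrm{Id} \in \Fc_2$ (since $\mathrm{Id} \in \Fc$), but $\Lip(f) = 5 \ge 4$, so $f \notin \Rc_2$.

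The main obstacle I anticipate is not the construction itself but verifying carefully that the toy examples truly lie in the claimed sets: checking $C^1$ regularity of the residual, confirming the Lipschitz constant of a residual is strictly less than one (not just $\le 1$), and confirming that the target is a bi-Lipschitz $C^1$-diffeomorphism so that membership in $\Fc$ via Lemma~\ref{lemma:inf} is legitimate. Linear scalings handle all of these conditions cleanly, so the proof should reduce to a short combination of the two counter-examples plus invocation of Theorem~\ref{thrm:imp_functions} for the final equality.
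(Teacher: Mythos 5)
Your proposal is correct, and the underlying idea is the same as the paper's: separate the sets by the quantitative Lipschitz upper bound $2^{\ell}$ on $\ell$-block ResFlows versus the unbounded Lipschitz constants available in $\Fc$ and $\Fc_2$. The difference is in how that idea is instantiated. The paper disposes of the corollary in one sentence, invoking $\Rc\subsetneqq\Fc$ (Lemma~\ref{lemma:inf}), $\Fc_2=\Ic$ (Theorem~\ref{thrm:imp_functions}), and a forward reference to Section~\ref{sec:com_multi}: the witness for $\Rc_2\subsetneqq\Ic$ is drawn from the family $\Pc(L,r)$ with $L>4$, whose disjointness from $\Rc_2$ is part of the general approximation-error bound of Theorem~\ref{thrm:error_bound} (via Lemma~\ref{lemma:resflow_bound}). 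You instead produce explicit linear witnesses --- $\tfrac{9}{4}\xv\in\Rc_2\setminus\Rc$ and $5\xv\in\Fc_2\setminus\Rc_2$ --- which makes the corollary self-contained, avoids the forward dependence on Theorem~\ref{thrm:error_bound}, and explicitly handles the first strict inclusion $\Rc\subsetneqq\Rc_2$, which the paper leaves implicit. (Your examples are in fact special cases of the paper's family: $5\xv\in\Pc(5,r)$ for every $r$.) What the paper's route buys in exchange is the stronger quantitative statement that the gap is not merely nonempty but contains functions with a non-negligible approximation error; your route buys brevity and independence from Section~\ref{sec:com_multi}. All the verifications you flag (the residual $g\equiv 0$ or $g=\tfrac{1}{2}\xv$ being $C^1$ with $\Lip(g)<1$, $\mathrm{Id}\in\Fc$, $\Lip(f)<2$ for $f\in\Rc$ and hence $\Lip(f)<4$ for $f\in\Rc_2$) go through as you anticipate.
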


The results on the 1D example in Fig.~\ref{fig:1D_example} (b) and (c) accord with Corollary~\ref{cor:R2subset}.
Besides, Corollary~\ref{cor:R2subset} can be generalized to the cases with $2\ell$-block ResFlows and $\ell$-block ImpFlows, which strongly motivates the usage of implicit layers in normalizing flows.

\subsection{Comparison with multi-block ResFlows}
\label{sec:com_multi}

We further investigate the relationship between $\Rc_{\ell}$ for $\ell > 2$ and $\Ic$, as illustrated in Fig.~\ref{fig:relationship} (b).
For a fixed $\ell$, the Lipschitz constant of functions in $\Rc_{\ell}$ is still bounded, and there exist infinite functions that are not in $\Rc_{\ell}$ but in $\Ic$. We construct one such function family: for any $L, r\in\Rb^{+}$, define
\begin{align}
    \Pc(L,r)=\{f: f\in\Fc, \exists\ \Bc_r\subset\Rb^d, \forall \x,\yv\in\Bc_r, \|f(\x)-f(\yv)\|_2\geq L\|\x-\yv\|_2\},
\end{align}
where $\Bc_r$ is an $d$-dimensional ball with radius of $r$. Obviously, $\Pc(L,r)$ is an infinite set.
Below, we will show that $\forall\  0<\ell<\log_2(L)$, $\Rc_{\ell}$ has a non-negligible approximation error for functions in $\Pc(L,r)$. However, they are exactly representable by functions in $\Ic$.

\begin{theorem}
\label{thrm:error_bound}
Given $L>0$ and $r>0$, we have
\begin{itemize}
    \item $\Pc(L,r)\subset \Ic$.
    \item $\forall\ 0<\ell<\log_2(L)$, $\Pc(L,r)\cap\Rc_{\ell}=\varnothing$. Moreover, for any $f\in\Pc(L,r)$ with $d$-dimensional ball $\Bc_r$, the minimal error for fitting $f$ in $\Bc_r$ by functions in $\Rc_{\ell}$ satisfies
    \begin{equation}
        \inf_{g\in\Rc_{\ell}}\sup_{\x\in\Bc_r}\|f(\x)-g(\x)\|_2\geq \frac{r}{2}(L-2^{\ell})
    \end{equation}
\end{itemize}
\end{theorem}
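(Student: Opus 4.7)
The plan is to handle the two bullets essentially independently, with the approximation-error bound following a standard triangle-inequality/Lipschitz comparison once we know how much each ResFlow block can stretch.

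For the first bullet, $\Pc(L,r) \subset \Ic$ is immediate from what is already in hand. By definition $\Pc(L,r) \subset \Fc$, and the remark after Theorem \ref{thrm:imp_functions} notes that $\mathrm{Id} \in \Fc$, hence $\Fc \subset \Fc_2 = \Ic$. So nothing new is needed here beyond invoking Theorem \ref{thrm:imp_functions}.

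For the second bullet, the engine is a uniform Lipschitz bound on $\Rc_\ell$. If $f = h + \mathrm{Id} \in \Rc$ with $\Lip(h) < 1$, then by the triangle inequality $\Lip(f) \leq \Lip(h) + 1 < 2$. Composing $\ell$ such blocks and using submultiplicativity of the Lipschitz constant gives $\Lip(g) < 2^\ell$ for every $g \in \Rc_\ell$. On the other hand, any $f \in \Pc(L,r)$ satisfies $\|f(\x)-f(\yv)\|_2 \geq L\|\x-\yv\|_2$ on its associated ball $\Bc_r$, which forces $\Lip(f) \geq L$. Whenever $\ell < \log_2(L)$, i.e. $2^\ell < L$, these bounds are incompatible, giving $\Pc(L,r) \cap \Rc_\ell = \varnothing$.

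The quantitative bound then comes from comparing the expansion of $f$ and the contraction ceiling of $g$ on two antipodal points of $\Bc_r$. Concretely, pick $\x,\yv \in \Bc_r$ with $\|\x-\yv\|_2 = 2r$ (the diameter). For any $g \in \Rc_\ell$,
\begin{align*}
2Lr \;\leq\; \|f(\x)-f(\yv)\|_2 \;&\leq\; \|f(\x)-g(\x)\|_2 + \|g(\x)-g(\yv)\|_2 + \|g(\yv)-f(\yv)\|_2 \\
&\leq\; 2\sup_{\uv \in \Bc_r}\|f(\uv)-g(\uv)\|_2 + 2^\ell \cdot 2r,
\end{align*}
so $\sup_{\uv \in \Bc_r}\|f(\uv)-g(\uv)\|_2 \geq r(L-2^\ell)$, which is at least the stated $\tfrac{r}{2}(L-2^\ell)$. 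Taking the infimum over $g \in \Rc_\ell$ finishes the statement.

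There is no real obstacle: the only substantive ingredient is the per-block bound $\Lip(f_i) < 2$, which follows directly from $\Lip(g_i) < 1$ and the identity residual. The mild subtlety is making sure one uses the \emph{specific} ball $\Bc_r$ associated to $f$ in the definition of $\Pc(L,r)$ (rather than quantifying over balls), so that the expansion lower bound is available at the two points chosen to realize the diameter.
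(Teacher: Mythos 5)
Your proof is correct and follows essentially the same route as the paper's: the first bullet is the identical one-line containment $\Pc(L,r)\subset\Fc\subset\Ic$, and the second rests on the same per-block bound $\Lip(f_i)<2$ (hence $\Lip(g)<2^{\ell}$ on $\Rc_{\ell}$, which is the paper's Lemma~\ref{lemma:resflow_bound}) combined with a triangle inequality on $\Bc_r$. Your choice of two diametrically opposite points actually yields the sharper constant $r(L-2^{\ell})$, whereas the paper fixes one point $\x_0$, takes a supremum over the other point and then over $\x_0$, losing a factor of $2$; both arguments establish the stated bound.
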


It follows Theorem~\ref{thrm:error_bound} that to model $f \in \Pc(L,r)$, we need only a single-block ImpFlow but at least a $\log_2(L)$-block ResFlow. In Fig.~\ref{fig:1D_example} (b), we show a 1D case where a 3-block ResFlow cannot fit a function that is exactly representable by a single-block ImpFlow.
In addition, we also prove some other properties of ImpFlows. In particular, $\Rc_3\not\subset\Ic$. We formally present the results in Appendix~\ref{appendix:impflow_other}.




\section{Generative Modeling with ImpFlows}
\label{sec:algorithm}


ImpFlows can be parameterized by neural networks and stacked to form a deep generative model 
to model high-dimensional data distributions. We develop a scalable algorithm to perform inference, sampling and learning in such models. For simplicity, we focus on a single-block during derivation.

Formally, a parametric ImpFlow block $\z=f(\x;\theta)$ is defined by
\begin{equation}
    F(\z,\x;\theta)=0,\ \text{where}\ F(\z,\x;\theta)=g_{\x}(\x;\theta)-g_{\z}(\z;\theta)+\x-\z,
\end{equation}
and $\Lip(g_{\x})<1$, $\Lip(g_{\z})<1$. Let $\theta$ denote all the parameters in $g_{\x}$ and $g_{\z}$ (which does NOT mean $g_{\x}$ and $g_{\z}$ share parameters). Note that $\x$ refers to the input of the layer, not the input data.

The \textit{inference} process to compute $\z$ given $\x$ in a single ImpFlow block is solved by finding the root of $F(\z, \x;\theta)=0$ w.r.t. $\z$, which cannot be explicitly computed because of the implicit formulation. Instead, we adopt a quasi-Newton method (i.e. Broyden's method~\citep{broyden1965class}) to solve this problem iteratively, as follows:
\begin{align}
    \z^{[i+1]}=\z^{[i]}-\alpha B F(\z^{[i]},\x;\theta), \ \text{for}\ i=0,1,\cdots,
\end{align}
where $B$ is a low-rank approximation of the Jacobian inverse\footnote{We refer readers to \citet{broyden1965class} for the calculation details for $B$.} and $\alpha$ is the step size which we use line search method to dynamically compute. The stop criterion is $\|F(\z^{[i]}, \x;\theta)\|_2<\epsilon_f$, where $\epsilon_f$ is a hyperparameter that balances the computation time and precision. As Theorem~\ref{thrm:imp_exist} guarantees the existence and uniqueness of the root, the convergence of the Broyden's method is also guaranteed, which is typically faster than a linear rate.

Another inference problem is to estimate the log-likelihood. Assume that $\z\sim p(\z)$ where $p(\z)$ is a simple prior distribution (e.g. standard Gaussian). The log-likelihood of $\x$ can be written by
\begin{equation}
\label{equ:logp_impflow}
    \ln p(\x)=\ln p(\z)+\ln\det(I+J_{g_{\x}}(\x))-\ln\det(I+J_{g_{\z}}(\z)),
\end{equation}
where $J_f(\x)$ denotes the Jacobian matrix of a function $f$ at $\x$. See Appendix.~\ref{appendix:logp_impflow} for the detailed derivation. Exact calculation of the log-determinant term requires $\Oc(d^3)$ time cost and is hard to scale up to high-dimensional data. Instead, we propose the following unbiased estimator of $\ln p(\x)$ using the same technique in \citet{chen2019residual} with Skilling-Hutchinson trace estimator~\citep{skilling1989eigenvalues,hutchinson1989stochastic}:
\begin{equation}
\label{equ:test_logp}
    \ln p(\x)=\ln p(\z)+\E_{n\sim p(N),\vv\sim\Nc(0,I)}\left[\sum_{k=1}^n \frac{(-1)^{k+1}}{k}\frac{\left(\vv^T[J_{g_{\x}}(\x)^k]\vv-\vv^T[J_{g_{\z}}(\z)^k]\vv\right)}{\Pb(N\geq k)}\right],
\end{equation}
where $p(N)$ is a distribution supported over the positive integers.

The \textit{sampling} process to compute $\x$ given $\z$ can also be solved by the Broyden's method, and the hyperparameters are shared with the inference process.

In the \textit{learning} process, we perform stochastic gradient descent to minimize the negative log-likelihood of the data, denoted as $\Lc$. For efficiency, we estimate the gradient w.r.t. the model parameters in the backpropagation manner. According to the chain rule and the additivity of the log-determinant, in each layer we need to estimate the gradients w.r.t. $\x$ and $\theta$ of Eqn.~\eqref{equ:logp_impflow}. In particular, the gradients computation involves two terms: one is $\pdv{}{(\cdot)}\ln\det(I+J_g(\x;\theta))$ and the other is $\pdv{\Lc}{\z}\pdv{\z}{(\cdot)}$, where $g$ is a function satisfying $\Lip(g)<1$ and $(\cdot)$ denotes $\x$ or $\theta$. On the one hand, for the log-determinant term, we can use the same technique as \citet{chen2019residual}, and obtain an unbiased gradient estimator as follows.
\begin{equation}
\label{equ:train_logp}
    \pdv{\ln\det(I+J_g(\x;\theta))}{(\cdot)}=\E_{n\sim p(N),\vv\sim\Nc(0,I)}\left[\left(\sum_{k=0}^n \frac{(-1)^k}{\Pb(N\geq k)}\vv^T J_g(\x;\theta)^k\right)\pdv{J_g(\x;\theta)}{(\cdot)}\vv\right],
\end{equation}
where $p(N)$ is a distribution supported over the positive integers. On the other hand, $\pdv{\Lc}{\z}\pdv{\z}{(\cdot)}$ can be computed according to the \textit{implicit function theorem} as follows (See details in Appendix~\ref{appendix:imp_grad}):
\begin{equation}
\label{equ:imp_grad}
    \pdv{\Lc}{\z}\pdv{\z}{(\cdot)}=\pdv{\Lc}{\z}J^{-1}_G(\z)\pdv{F(\z,\x;\theta)}{(\cdot)},\ \text{where}\ G(\z;\theta)=g_{\z}(\z;\theta)+\z.
\end{equation}
In comparision to directly calculate the gradient through the quasi-Newton iterations of the forward pass, the implicit gradient above is simple and memory-efficient, treating the root solvers as a black-box.  Following~\citet{bai2019deep}, we compute $\pdv{\Lc}{\z}J^{-1}_G(\z)$ by solving a linear system iteratively, as detailed in Appendix~\ref{appendix:computation}. 
The training algorithm is formally presented 
in Appendix~\ref{appendix:algorithm}.

\section{Experiments}
We demonstrate the model capacity of ImpFlows on the classification and density modeling tasks\footnote{
See \href{https://github.com/thu-ml/implicit-normalizing-flows}{https://github.com/thu-ml/implicit-normalizing-flows} for details.}. In all experiments, we use spectral normalization~\citep{miyato2018spectral} to enforce the Lipschitz constrants, where the Lipschitz constant upper bound of each layer (called Lipschitz coefficient) is denoted as $c$. For the Broyden's method, we use $\epsilon_f=10^{-6}$ and $\epsilon_b=10^{-10}$ for training and testing to numerically ensure the invertibility and the stability during training. 
Please see other detailed settings including 
the method of estimating the log-determinant,
the network architecture, learning rate, batch size, and so on in Appendix~\ref{appendix:network}.

\subsection{Verifying Capacity on Classification}

\begin{table}[t]
    \centering
    \caption{Classification error rate (\%) on test set of vanilla ResNet, ResFlow and ImpFlow of ResNet-18 architecture, with varying Lipschitz coefficients $c$.}
    \vspace{.1cm}
    \begin{tabular}{cc|cccccc}
    \toprule
    & & Vanilla & $c=0.99$ & $c=0.9$ & $c=0.8$ & $c=0.7$ & $c=0.6$ \\
    \midrule
    \multirow{4}{*}{CIFAR10} & \multirow{2}{*}{ResFlow} & \multirow{4}{*}{6.61($\pm$0.02)} & 8.24 & 8.39 & 8.69 & 9.25 & 9.94 \\
    & & & ($\pm$0.03) & ($\pm$0.01) & ($\pm$0.03) & ($\pm$0.02) & ($\pm$0.02) \\
    & \multirow{2}{*}{ImpFlow} & & \textbf{7.29} & \textbf{7.41} & \textbf{7.94} & \textbf{8.44} & \textbf{9.22} \\
    & & & ($\pm$0.03) & ($\pm$0.03) & ($\pm$0.06) & ($\pm$0.04) & ($\pm$0.02) \\
    \midrule
    \multirow{4}{*}{CIFAR100} & \multirow{2}{*}{ResFlow} & \multirow{4}{*}{27.83($\pm$0.03)} & 31.02 & 31.88 & 32.21 & 33.58 & 34.48 \\
    & & & ($\pm$0.05) & ($\pm$0.02) & ($\pm$0.03) & ($\pm$0.02) & ($\pm$0.03) \\
    & \multirow{2}{*}{ImpFlow} & & \textbf{29.06} & \textbf{30.47} & \textbf{31.40} & \textbf{32.64} & \textbf{34.17} \\
    & & & ($\pm$0.03) & ($\pm$0.03) & ($\pm$0.03) & ($\pm$0.01) & ($\pm$0.02) \\
    \bottomrule
    \end{tabular}
    \label{tab:exp_classification}
\end{table}


\begin{table}[t]
    \vspace{-0.3cm}
    \centering
    \caption{Average test log-likelihood (in nats) of tabular datasets. Higher is better.
    }
    \vspace{.1cm}
    \begin{tabular}{lrrrrr}
    \toprule
    & POWER & GAS & HEPMASS & MINIBOONE & BSDS300 \\
    \midrule
    RealNVP~\citep{realnvp} & 0.17 & 8.33 & -18.71 & -13.55 & 153.28\\
    FFJORD~\citep{grathwohl2018ffjord} & 0.46 & 8.59 & -14.92 & -10.43 & 157.40\\
    MAF~\citep{papamakarios2017masked} & 0.24 & 10.08 & -17.70 & -11.75 & 155.69\\
    NAF~\citep{huang2018neural} & \textbf{0.62} & 11.96 & -15.09 & \textbf{-8.86} & \textbf{157.73}\\
    ImpFlow ($L=20$) & 0.61 & \textbf{12.11} & \textbf{-13.95} & -13.32 & 155.68 \\
    \midrule
    ResFlow ($L=10$) & 0.26 & 6.20 & -18.91 & -21.81 & 104.63\\
    ImpFlow ($L=5$) & \textbf{0.30} & \textbf{6.94} & -\textbf{18.52} & -\textbf{21.50} & \textbf{113.72} \\
    \bottomrule
    \end{tabular}
    \label{tab:exp_tabular}
    \vspace{-.2cm}
\end{table}

\begin{table}[t]
    \centering
    \caption{Average bits per dimension of ResFlow and ImpFlow on CIFAR10, with varying Lipschitz coefficients $c$. Lower is better.
    }
    \vspace{.1cm}
    \begin{tabular}{ccccccc}
    \toprule
    & $c=0.9$ & $c=0.8$ & $c=0.7$ & $c=0.6$ \\
    \midrule
    ResFlow ($L=12$) & 3.469($\pm$0.0004) & 3.533($\pm$0.0002) & 3.627($\pm$0.0004) & 3.820($\pm$0.0003)\\
    ImpFlow ($L=6$) & \textbf{3.452}($\pm$0.0003) & \textbf{3.511}($\pm$0.0002) & \textbf{3.607}($\pm$0.0003) & \textbf{3.814}($\pm$0.0005) \\
    \bottomrule
    \end{tabular}
    \label{tab:exp_cifar10}
    \vspace{-.3cm}
\end{table}

We first empirically compare ResFlows and ImpFlows on classification tasks.
Compared with generative modeling, classification is a more direct measure of the richness of the functional family, because it isolates the function fitting from generative modeling subtleties, such as log-determinant estimation. 
We train both models in the same settings on CIFAR10 and CIFAR100~\citep{krizhevsky2009learning}. Specifically, we use an architecture similar to ResNet-18~\citep{he2016deep}. 
Overall, the amount of parameters of ResNet-18 with vanilla ResBlocks, ResFlows and ImpFlows are the same of $6.5$M. The detailed network structure can be found in Appendix~\ref{appendix:network}. 
The classification results are shown in Table~\ref{tab:exp_classification}. 
To see the impact of the Lipschitz constraints, we vary the Lipschitz coefficient $c$ to show the difference between ResFlows and ImpFlows under the condition of a fixed Lipschitz upper bound.
Given different values of $c$, the classification results of ImpFlows are consistently better than those of ResFlows.
These results empirically validate Corollary~\ref{cor:R2subset}, which claims that the functional family of ImpFlows is richer than ResFlows. Besides, for a large Lipschitz constant upper bound $c$, ImpFlow blocks are comparable
with the vanilla ResBlocks in terms of classification.

\subsection{Density Modeling On 2D Toy Data}
\begin{wrapfigure}{r}{0.55\textwidth}
    \vspace{-0.35cm}
	\centering
	\begin{minipage}{0.32\linewidth}
		\centering
			\includegraphics[width=0.96\linewidth]{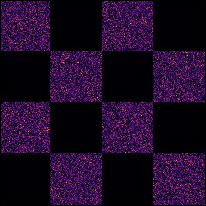}\\
\small{(a) Checkerboard data ($5.00$ bits)}
	\end{minipage}
\begin{minipage}{0.32\linewidth}
	\centering		
	\includegraphics[width=.96\linewidth]{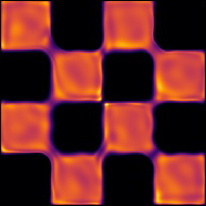}\\
	\small{(b) ResFlow, $L=8$ ($5.08$ bits)}
\end{minipage}
\begin{minipage}{0.32\linewidth}
	\centering		
	\includegraphics[width=.96\linewidth]{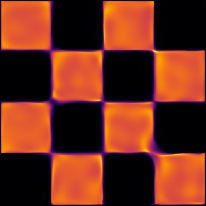}\\
	\small{(c) ImpFlow, $L=4$ ($5.05$ bits)}
\end{minipage}
	\caption{Checkerboard data density and the results of a 8-block ResFlow and a 4-block ImpFlow.\label{fig:checker_density}}
	\vspace{-.2cm}
\end{wrapfigure}

For the density modeling tasks, we first evaluate ImpFlows on the \textit{Checkerboard} data whose density is multi-modal, as shown in Fig.~\ref{fig:checker_density} (a). 
For fairness, we follow the same experiment settings as \citet{chen2019residual} (which are specified in Appendix~\ref{appendix:network}), except that we adopt a Sine~\citep{sitzmann2020implicit} activation function for all models. 
We note that the data distribution has a bounded support while we want to fit a transformation $f$ mapping it to the standard Gaussian distribution, whose support is unbounded. 
A perfect $f$ requires a sufficiently large $\|J_f(\x)\|_2$ for some $x$ mapped far from the mean of the Gaussian. Therefore, the Lipschtiz constant of such $f$ is too large to be fitted by a ResFlow with 8 blocks (See Fig.~\ref{fig:checker_density} (b)).
A $4$-block ImpFlow can achieve a result of $5.05$ bits, which outperforms the $5.08$ bits of a $8$-block ResFlow with the same number of parameters.
Such results accord with our theoretical results in Theorem~\ref{thrm:imp_functions}
and strongly motivate ImpFlows.

\subsection{Density Modeling On Real Data}
We also train ImpFlows on some real density modeling datasets, including the tabular datasets (used by~\citet{papamakarios2017masked}), CIFAR10 and 5-bit $64\times 64$ CelebA~\citep{kingma2018glow}. For all the real datasets, we use the scalable algorithm proposed in Sec.~\ref{sec:algorithm}.

We test our performance on five tabular datasets: POWER ($d=6$), GAS ($d=8$), HEPMASS ($d=21$), MINIBOONE ($d=43$) and BSDS300 ($d=63$) from the UCI repository~\citep{Dua:2019}, where $d$ is the data dimension. For a fair comparison, on each dataset we use a 10-block ResFlow and a 5-block ImpFlow with the same amount of parameters, and a 20-block ImpFlow for a better result. The detailed network architecture and hyperparameters can be found in Appendix~\ref{appendix:network}. Table~\ref{tab:exp_tabular} shows the average test log-likelihood for ResFlows and ImpFlows. ImpFlows  achieves better density estimation performance than ResFlow consistently on all datasets. Again, the results demonstrate the effectiveness of ImpFlows.

Then we test ImpFlows on the CIFAR10 dataset. We train a multi-scale convolutional version for both ImpFlows and ResFlows, following the same settings as \citet{chen2019residual} except that we use a smaller network of 5.5M parameters for both ImpFlows and ResFlows (see details in Appendix~\ref{appendix:network}).
As shown in Table~\ref{tab:exp_cifar10}, Impflow achieves better results than ResFlow consistently given different values of the Lipschitz coefficient $c$. Moreover,
the computation time of ImpFlow is comparable to that of ResFlow. See Appendix~\ref{appendix:computation_time} for detailed results. 
Besides, there is a trade-off between the expressiveness and the numerical optimization of ImpFlows in larger models. Based on the above experiments, we believe that advances including an lower-variance estimate of the log-determinant can benefit ImpFlows in larger models, which is left for future work.


We also train ImpFlows on the 5-bit $64\times 64$ CelebA. For a fair comparison, we use the same settings as \citet{chen2019residual}. The samples from our model are shown in Appendix~\ref{appendix:sample}.

\section{Conclusions}
We propose implicit normalizing flows (ImpFlows), which generalize normalizing flows via utilizing an implicit invertible mapping defined by the roots of the equation $F(\zv, \xv)=0$. ImpFlows build on Residual Flows (ResFlows) with a good balance between tractability and expressiveness. We 
show that the functional family of ImpFlows is richer than that of ResFlows, particularly for modeling functions with large Lipschitz constants. 
Based on the implicit differentiation formula, we present a scalable algorithm to train and evaluate ImpFlows. Empirically, ImpFlows outperform ResFlows on several classification and density modeling benchmarks.  Finally, while this paper mostly focuses on the implicit generalization of ResFlows, the general idea of utilizing implicit functions for NFs could be extended to a wider scope. We leave it as a future work. 

\section*{Acknowledgement}

We thank Yuhao Zhou, Shuyu Cheng, Jiaming Li, Kun Xu, Fan Bao, Shihong Song and Qi'An Fu for proofreading. This work was supported by the National Key Research and Development Program of China (Nos. 2020AAA0104304), NSFC Projects (Nos. 61620106010, 62061136001, U19B2034, U181146, 62076145), Beijing NSF Project (No. JQ19016), Beijing Academy of Artificial Intelligence (BAAI), Tsinghua-Huawei Joint Research Program, Huawei Hisilicon Kirin Intelligence Engineering Development, the MindSpore team, a grant from Tsinghua Institute for Guo Qiang, Tiangong Institute for Intelligent Computing, and the NVIDIA NVAIL Program with GPU/DGX Acceleration. C. Li was supported by the fellowship of China postdoctoral Science Foundation (2020M680572), and the fellowship of China national postdoctoral program for innovative talents (BX20190172) and Shuimu Tsinghua Scholar. J. Chen was supported by Shuimu Tsinghua Scholar.

\bibliography{implicit_flow}
\bibliographystyle{iclr2021_conference}

\appendix
\section{Additional Lemmas and Proofs}
\label{appendix:proof}
\subsection{Proof For Theorem~\ref{thrm:imp_exist}}
\label{appendix:thrm1}
\begin{proof}(\textbf{Theorem~\ref{thrm:imp_exist}})

Firstly, $\forall \xv_0 \in \R^d $, the mapping
\begin{equation*}
 h_{\xv_0 }(\zv)=F(\zv,\xv_0)+\zv
\end{equation*}
is a contrative mapping, which can be shown by Lipschitz condition of $g_z$ :
\begin{equation*}
 \|(F(\zv_1,\xv_0)+\zv_1)-(F(\zv_2,\xv_0)+\zv_2)\|
 =\|g_z(\zv_1)-g_z(\zv_2)\|
 <\|\zv_1-\zv_2\|.
\end{equation*}
Therefore, $h_{\xv_0}(\zv)$ has an unique fixed point, denoted by $f(\xv_0)$ :
\begin{equation*}
 h_{\xv_0}(f(\xv_0))=f(\xv_0) \Leftrightarrow F(f(\xv_0),\xv_0)=0
\end{equation*}
Similarly, we also have: $\forall \zv_0\in\Rb^d$, there exists an unique $g(\zv_0)$ satisfying $F(\zv_0,g(\zv_0))=0$.

Moreover, Let $\zv_0=f(\xv_0)$, we have $F(f(\xv_0),g(f(\xv_0)))=0$. By the uniqueness, we have $g(f(\xv_0))=\xv_0, \forall \xv_0\in\Rb^d$ .
Similarly, $f(g(\xv_0))=\xv_0, \forall \xv_0\in\Rb^d$.
Therefore, $f$ is unique and invertible.
\end{proof}

\subsection{Proof For Theorem~\ref{thrm:imp_functions}}

We denote $\Dc$ as the set of all bi-Lipschitz $C^1$-diffeomorphisms from $\Rb^d$ to $\Rb^d$.

Firstly, we prove Lemma~\ref{lemma:inf} in the main text.

\begin{proof}(\textbf{Lemma~\ref{lemma:inf}}).
$\forall f\in\Rc$, we have
\begin{equation*}
    \sup_{\x\in\Rb^d}\|J_f(\x)-I\|^2_2<1,
\end{equation*}
which is equivalent to
\begin{align*}
&    \sup_{\x\in\Rb^d,\vv\in\Rb^d,\|\vv\|_2=1}\|(J_f(\x)-I)\vv\|^2_2<1\ (\text{Definition of operator norm.})\\
&   \sup_{\x\in\Rb^d,\vv\in\Rb^d,\|\vv\|_2=1}\vv^T(J^T_f(\x)-I)(J_f(\x)-I)\vv<1\\
&    \sup_{\x\in\Rb^d,\vv\in\Rb^d,\|\vv\|_2=1}\vv^T J^T_f(\x)J_f(\x)\vv-2\vv^T J_f(\x)\vv<0
\end{align*}
Note that $J_f(\x)$ is nonsingular, so $\forall \x,\vv\in\Rb^d, \|\vv\|_2=1$, we have $\vv^T J^T_f(\x)J_f(\x)\vv>0$. Thus,
\begin{align*}
    0>\sup_{\x\in\Rb^d,\vv\in\Rb^d,\|\vv\|_2=1}\vv^T J^T_f(\x)J_f(\x)\vv-2\vv^T J_f(\x)\vv\geq \sup_{\x\in\Rb^d,\vv\in\Rb^d,\|\vv\|_2=1}-2\vv^T J_f(\x)\vv
\end{align*}
So we have
\begin{align*}
    \inf_{\x\in\Rb^d,\vv\in\Rb^d,\|\vv\|_2=1}\vv^T J_f(\x)\vv > 0.
\end{align*}
Note that the converse is not true, because $\vv^T J_{f}(\x)\vv>0$ does not restrict the upper bound of Lipschitz constant of $f$. For example, when $f(\xv)=m \xv$ where $m$ is a positive real number, we have
\begin{equation*}
 \inf_{\x\in\Rb^d,\vv\in\Rb^d,\|\vv\|_2=1}\vv^T J_f(\x)\vv =\inf_{\x\in\Rb^d,\vv\in\Rb^d,\|\vv\|_2=1}\vv^T (m I) \vv
 =m > 0
\end{equation*}
However, $m$ can be any large positive number. So we have $\Rc\subsetneqq\Fc$.
\end{proof}

\begin{lemma}
\label{lemma:inf_twoside}
$\forall f\in\Dc$, if 
\begin{equation}
\inf_{\substack{\x\in\Rb^d,\vv\in\Rb^d, \\ \|\vv\|_2=1}}\vv^T J_{f}(\x)\vv>0,
\end{equation}
then
\begin{equation}
\inf_{\substack{\x\in\Rb^d,\vv\in\Rb^d, \\ \|\vv\|_2=1}}\vv^T J_{f^{-1}}(\x)\vv>0,
\end{equation}
\end{lemma}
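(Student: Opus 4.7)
The plan is to convert the positivity condition on $J_f$ into a positivity condition on $J_{f^{-1}}$ via the change-of-variable identity $J_{f^{-1}}(y) = J_f(f^{-1}(y))^{-1}$, which holds because $f$ is a $C^1$-diffeomorphism. Since $f^{-1}$ ranges over all of $\mathbb{R}^d$ as $y$ does, it suffices to establish, uniformly in $x$, the bound
\begin{equation*}
\inf_{\|v\|_2=1} v^\top M^{-1} v \;>\; 0, \qquad M := J_f(x),
\end{equation*}
with a lower bound that does not depend on $x$.

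The key algebraic step I would use is the substitution $w = M^{-1} v$, so that $v = Mw$ and
\begin{equation*}
v^\top M^{-1} v \;=\; (Mw)^\top w \;=\; w^\top M^\top w \;=\; w^\top M w,
\end{equation*}
where the last equality holds because the scalar $w^\top M^\top w$ equals its own transpose. By hypothesis, there is a constant $c>0$ such that $w^\top M w \geq c\|w\|_2^2$ for every $x$ and every $w$, so
\begin{equation*}
v^\top M^{-1} v \;\geq\; c\,\|M^{-1} v\|_2^2.
\end{equation*}
It then remains to bound $\|M^{-1} v\|_2$ from below. This is where the bi-Lipschitz hypothesis on $f$ enters: since $f$ is Lipschitz, Proposition~\ref{lemma:lip_jacob} gives $\|M\|_2 \leq \mathrm{Lip}(f) =: L < \infty$, and from $\|v\|_2 = \|M M^{-1} v\|_2 \leq L\|M^{-1} v\|_2$ we obtain $\|M^{-1} v\|_2 \geq \|v\|_2 / L$. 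Combining this with the previous display yields $v^\top M^{-1} v \geq c/L^2$ for every unit $v$ and every $x$, which is exactly the required uniform positive lower bound after transporting back to the $y$-variable.

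I do not expect a serious obstacle here: the only non-obvious manipulation is the identity $v^\top M^{-1} v = (M^{-1}v)^\top M (M^{-1}v)$, which turns an inequality about $M$ into one about $M^{-1}$ and isolates precisely the factor $\|M^{-1}v\|_2^2$ whose lower bound is controlled by the Lipschitz constant of $f$. The slight subtlety worth flagging is that the condition $v^\top Mv > 0$ controls only the symmetric part of $M$ rather than $M$ itself, so one must avoid the temptation to use any matrix-inverse argument that implicitly assumes symmetry — but the substitution above sidesteps this issue entirely, and the rest is a direct application of the bi-Lipschitz hypothesis which is already part of the definition of $\mathcal{D}$.
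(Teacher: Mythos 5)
Your proof is correct and follows essentially the same route as the paper's: both use the inverse-function-theorem identity $J_{f^{-1}} = J_f^{-1}\circ f^{-1}$, then the substitution $u = J_f^{-1}v$ to rewrite $v^\top J_f^{-1} v = u^\top J_f u$ (via the scalar-transpose trick), and finally the Lipschitz bound $\|u\|_2 \ge 1/\mathrm{Lip}(f)$ to obtain the uniform lower bound $\inf v^\top J_f(x) v / \mathrm{Lip}(f)^2$. The only cosmetic difference is that the paper normalizes $u$ to a unit vector before invoking the hypothesis, whereas you keep $w = M^{-1}v$ unnormalized and absorb its norm into the bound directly.
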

\begin{proof}(Proof of Lemma~\ref{lemma:inf_twoside}).
By \textit{Inverse Function Theorem}, 
\begin{align*}
    J_{f^{-1}}(\x)=J^{-1}_f(f^{-1}(\x)).
\end{align*}
Because $f$ is from $\Rb^d$ to $\Rb^d$, we have
\begin{align*}
    \inf_{\x\in\Rb^d,\vv\in\Rb^d,\|\vv\|_2=1}\vv^T J_{f^{-1}}(\x)\vv &= \inf_{\x\in\Rb^d,\vv\in\Rb^d,\|\vv\|_2=1}\vv^T J^{-1}_{f}(f^{-1}(\x))\vv\\
    &=\inf_{\x\in\Rb^d,\vv\in\Rb^d,\|\vv\|_2=1}\vv^T J^{-1}_{f}(\x)\vv
\end{align*}
Let $\uv=J^{-1}_f(\x)\vv$ and $\vv_0=\frac{\uv}{\|\uv\|_2}$, we have $\|\vv_0\|_2=1$, and
\begin{align*}
    \vv^T J^{-1}_{f}(\x)\vv=\uv^T J^T_f(\x)\uv=\uv^T J_f(\x)\uv=\|\uv\|_2^2\vv_0^T J_f(\x)\vv_0.
\end{align*}
The above equation uses this fact: for a real $d\times d$ matrix $A$, $\forall \x\in\Rb^d, \x^TA\x=(\x^TA\x)^T=\x^TA^T\x$ because $\x^TA\x\in\Rb$.

Note that $f$ is Lipschitz continuous, $\|J_{f}(\x)\|_2\leq\Lip(f)$. So
\begin{align*}
    1=\|\vv\|_2\leq \|J_f(\x)\|_2\|\uv\|_2\leq \Lip(f)\|\uv\|_2,
\end{align*}
which means
\begin{align*}
    \|\uv\|_2\geq \frac{1}{\Lip(f)}.
\end{align*}
Thus,
\begin{align*}
    \inf_{\x\in\Rb^d,\vv\in\Rb^d,\|\vv\|_2=1}\vv^T J^{-1}_f(\x)\vv &=
    \inf_{\x\in\Rb^d,\vv\in\Rb^d,\|\vv\|_2=1} \|\uv\|_2^2 \vv_0^T J_f(\x)\vv_0\\
    &\geq \inf_{\x\in\Rb^d,\uv\in\Rb^d,\|J_f(\x)\uv\|_2=1} \|\uv\|_2^2  \inf_{\x\in\Rb^d,\vv_0\in\Rb^d,\|\vv_0\|_2=1}\vv_0^T J_f(\x)\vv_0\\
    &\geq \frac{1}{\Lip(f)^2}\inf_{\x\in\Rb^d,\vv\in\Rb^d,\|\vv\|_2=1}\vv^T J_f(\x)\vv\\
    &>0
\end{align*}
\end{proof}

\begin{lemma}
\label{lemma:inf_inverse}
$\forall f\in\Dc$, if $f^{-1}\in\Rc$, we have
\begin{equation}
\inf_{\substack{\x\in\Rb^d,\vv\in\Rb^d, \\ \|\vv\|_2=1}}\vv^T J_{f}(\x)\vv>0.
\end{equation}
\end{lemma}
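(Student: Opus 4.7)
The plan is to reduce Lemma~\ref{lemma:inf_inverse} immediately to a composition of the two preceding results in the appendix, namely Lemma~\ref{lemma:inf} and Lemma~\ref{lemma:inf_twoside}. The intuition is that $f^{-1} \in \Rc$ gives us, via Lemma~\ref{lemma:inf}, positive-definiteness-in-quadratic-form of $J_{f^{-1}}$; then Lemma~\ref{lemma:inf_twoside} (which is the claim that this quadratic-form positivity transfers from a bi-Lipschitz map to its inverse) lets us pass from $f^{-1}$ back to $f = (f^{-1})^{-1}$, which is exactly what we want.

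More concretely, first I would observe that $f^{-1} \in \Rc$ implies $f^{-1} \in \Dc$: by \citet[Lemma 2]{behrmann2019invertible} (quoted in the main text right before Lemma~\ref{lemma:inf}) every member of $\Rc$ is bi-Lipschitz, and by $\Rc \subset C^1$ it is also a $C^1$-diffeomorphism. Hence $f^{-1}$ is a valid input to Lemma~\ref{lemma:inf_twoside}. Next, applying Lemma~\ref{lemma:inf} directly to $f^{-1} \in \Rc$ yields
\[
\inf_{\substack{\x\in\Rb^d,\,\vv\in\Rb^d \\ \|\vv\|_2=1}} \vv^{\top} J_{f^{-1}}(\x)\,\vv \;>\; 0.
\]
Then I would invoke Lemma~\ref{lemma:inf_twoside} with $f^{-1}$ playing the role of the ``$f$'' in the statement of that lemma: the conclusion reads
\[
\inf_{\substack{\x\in\Rb^d,\,\vv\in\Rb^d \\ \|\vv\|_2=1}} \vv^{\top} J_{(f^{-1})^{-1}}(\x)\,\vv \;>\; 0,
\]
and since $(f^{-1})^{-1} = f$ (because $f \in \Dc$ is a bijection), this is precisely the desired inequality.

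There is no real obstacle: the only thing to be careful about is verifying the hypothesis of Lemma~\ref{lemma:inf_twoside}, i.e., that $f^{-1}$ indeed lies in $\Dc$, and recognizing that both Lemma~\ref{lemma:inf} and Lemma~\ref{lemma:inf_twoside} have already done all the analytic work (the inverse function theorem, bounding $\|\uv\|_2$ below by $1/\Lip(f)$, and extracting positivity of the quadratic form from $\Lip(g) < 1$ via the identity $\vv^{\top}(J_f - I)^{\top}(J_f - I)\vv < 1$). Therefore the proof of Lemma~\ref{lemma:inf_inverse} is a two-line corollary and does not require any new calculation.
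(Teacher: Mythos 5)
Your proposal is correct and matches the paper's own proof exactly: the paper likewise applies Lemma~\ref{lemma:inf} to $f^{-1}\in\Rc$, notes $f^{-1}\in\Dc$, and then invokes Lemma~\ref{lemma:inf_twoside} to transfer the quadratic-form positivity back to $f$. Your extra care in checking that $f^{-1}$ satisfies the hypotheses of Lemma~\ref{lemma:inf_twoside} is a small but welcome addition over the paper's terser version.
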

\begin{proof}(Proof of Lemma~\ref{lemma:inf_inverse}).
$\forall f\in\Dc$, if $f^{-1}\in\Rc$, then from Lemma~\ref{lemma:inf}, we have 
\begin{equation*}
    \inf_{\x\in\Rb^d,\vv\in\Rb^d,\|\vv\|_2=1}\vv^T J_{f^{-1}}(\x)\vv > 0.
\end{equation*}
Note that $f^{-1}\in\Dc$, from Lemma~\ref{lemma:inf_twoside} we have
\begin{equation*}
    \inf_{\x\in\Rb^d,\vv\in\Rb^d,\|\vv\|_2=1}\vv^T J_f(\x)\vv > 0.
\end{equation*}
\end{proof}

\begin{lemma}
\label{lemma:eps_exist}
$\forall f\in\Dc$, if
\begin{equation*}
\inf_{\x\in\Rb^d,\vv\in\Rb^d,\|\vv\|_2=1}\vv^T J_f(\x)\vv > 0,    
\end{equation*}
then $\exists\ \alpha_0>0$, s.t. $\forall\ 0<\alpha<\alpha_0$,
\begin{equation*}
    \sup_{\x\in\Rb^d}\|\alpha J_f(\x)-I\|_2<1.
\end{equation*}
\end{lemma}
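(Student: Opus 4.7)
The plan is to reduce the operator-norm bound to a scalar estimate that is quadratic in $\alpha$, using the hypothesis to control the first-order term in $\alpha$ and the Lipschitz assumption to control the second-order term.

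First, I would unpack the operator norm. For any $\x\in\Rb^d$ and any unit vector $\vv\in\Rb^d$,
\begin{equation*}
\|(\alpha J_f(\x)-I)\vv\|_2^{\,2}
= 1 - 2\alpha\,\vv^\top J_f(\x)\vv + \alpha^2\,\vv^\top J_f(\x)^\top J_f(\x)\vv,
\end{equation*}
so taking the supremum over unit $\vv$ gives $\|\alpha J_f(\x)-I\|_2^{\,2}$. I will now produce uniform bounds on the two nontrivial terms. Let $m := \inf_{\x,\vv:\|\vv\|_2=1} \vv^\top J_f(\x)\vv$, which is strictly positive by hypothesis. Since $f\in\Dc$ is bi-Lipschitz, $L := \Lip(f)<\infty$, and by Proposition~\ref{lemma:lip_jacob} we have $\|J_f(\x)\|_2\le L$ for a.e.\ $\x$; since $f\in C^1$ (within $\Dc$), this actually holds for every $\x$. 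Hence $\|J_f(\x)\vv\|_2^{\,2}\le L^2$ for all $\x$ and all unit $\vv$.

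Combining these two uniform bounds yields, for every $\x\in\Rb^d$ and every unit $\vv\in\Rb^d$,
\begin{equation*}
\|(\alpha J_f(\x)-I)\vv\|_2^{\,2} \;\le\; 1 - 2\alpha m + \alpha^2 L^2.
\end{equation*}
Therefore $\sup_{\x\in\Rb^d}\|\alpha J_f(\x)-I\|_2^{\,2}\le 1-2\alpha m+\alpha^2 L^2$. The right-hand side, viewed as a polynomial in $\alpha$, equals $1$ at $\alpha=0$ and has derivative $-2m<0$ there, so it is strictly less than $1$ on an interval $(0,\alpha_0)$. Solving the quadratic inequality $1-2\alpha m+\alpha^2 L^2<1$ explicitly, I would set $\alpha_0 := 2m/L^2>0$; then for every $\alpha\in(0,\alpha_0)$,
\begin{equation*}
\sup_{\x\in\Rb^d}\|\alpha J_f(\x)-I\|_2^{\,2}
\;\le\; 1 - 2\alpha m + \alpha^2 L^2
\;<\; 1 - 2\alpha m + \alpha\cdot\alpha_0 L^2
\;=\; 1,
\end{equation*}
which gives the desired conclusion.

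The only subtlety I anticipate is the passage from the a.e.\ bound on $\|J_f\|_2$ supplied by Rademacher to a pointwise bound valid for every $\x$, and making sure the supremum over $\x$ in the statement is realized by that pointwise bound. Because the hypothesis already uses $\inf_{\x\in\Rb^d}$ (not essential infimum) and the lemma is applied in the paper to $g_\z+\mathrm{Id}$, $g_\x+\mathrm{Id}$ with $C^1$ residual blocks, I would justify this by noting that within $\Dc$ the functions are $C^1$, so $J_f$ is continuous and both $\vv^\top J_f(\x)\vv$ and $\|J_f(\x)\vv\|_2$ are bounded pointwise by $m$ and $L$ respectively. The rest is a one-line calculation.
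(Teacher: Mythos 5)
Your argument is correct and follows essentially the same route as the paper's proof: expand $\|(\alpha J_f(\x)-I)\vv\|_2^2 = 1 - 2\alpha\,\vv^\top J_f(\x)\vv + \alpha^2\|J_f(\x)\vv\|_2^2$, bound the linear term below by $m$ and the quadratic term above by $\Lip(f)^2$, and choose $\alpha_0$ from the resulting quadratic inequality (the paper takes $\alpha_0 = m/\Lip(f)^2$ rather than your sharper $2m/\Lip(f)^2$, but both work). Your closing remark on the pointwise validity of the Jacobian bounds via $C^1$-smoothness of elements of $\Dc$ is a fine justification of a point the paper leaves implicit.
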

\begin{proof}(Proof of Lemma~\ref{lemma:eps_exist}).
Note that $f$ is Lipschitz continuous, so $\Lip(f)=\sup_{\x\in\Rb^d}\|J_f(\x)\|_2$. Denote
\begin{equation*}
    \beta=\inf_{\x\in\Rb^d,\vv\in\Rb^d,\|\vv\|_2=1}\vv^T J_f(\x)\vv.
\end{equation*}
And let
\begin{equation*}
    \alpha_0=\frac{\beta}{\Lip(f)^2}>0.
\end{equation*}
$\forall\ 0<\alpha<\alpha_0$, we have
\begin{align*}
    \sup_{\x\in\Rb^d}\|\alpha J_f(\x)-I\|^2_2&=\sup_{\x\in\Rb^d,\vv\in\Rb^d,\|\vv\|_2=1}\vv^T(\alpha J^T_f(\x)-I)(\alpha J_f(\x)-I)\vv\\
    &=1+\sup_{\x\in\Rb^d,\vv\in\Rb^d,\|\vv\|_2=1}\alpha^2\vv^T J^T_f(\x)J_f(\x)\vv-2\alpha\vv^T J_f(\x)\vv\\
    &\leq 1+\alpha^2\sup_{\x\in\Rb^d,\vv\in\Rb^d,\|\vv\|_2=1}\vv^T J^T_f(\x)J_f(\x)\vv\\
    &\quad\quad + 2\alpha\sup_{\x\in\Rb^d,\vv\in\Rb^d,\|\vv\|_2=1}\left(-\vv^T J_f(\x)\vv\right)\\
    &=1+\alpha^2\sup_{\x\in\Rb^d,\vv\in\Rb^d,\|\vv\|_2=1}\vv^T J^T_f(\x)J_f(\x)\vv\\
    &\quad\quad - 2\alpha\inf_{\x\in\Rb^d,\vv\in\Rb^d,\|\vv\|_2=1}\vv^T J_f(\x)\vv\\
    &=1+\alpha^2\sup_{\x\in\Rb^d}\|J_f(\x)\|^2_2-2\alpha\beta\\
    &=1+\alpha(\alpha\Lip(f)^2-2\beta)\\
    &<1+\alpha(\alpha_0\Lip(f)^2-2\beta)\\
    &=1-\alpha\beta\\
    &<1.
\end{align*}
The above equation uses this fact: for a real $d\times d$ matrix $A$, $\forall \x\in\Rb^d, \x^TA\x=(\x^TA\x)^T=\x^TA^T\x$ because $\x^TA\x\in\Rb$.
\end{proof}

\begin{proof}(\textbf{Theorem~\ref{thrm:imp_functions}})
Denote
\begin{equation*}
\begin{split}
    \Pc=\{f\in\Dc\ |\ &\exists f_1,f_2\in\Dc,f=f_2\circ f_1, \text{where } \\
    &\inf_{\x\in\Rb^d,\vv\in\Rb^d, \|\vv\|_2=1}\vv^T J_{f_1}(\x)\vv>0,\inf_{\x\in\Rb^d,\vv\in\Rb^d, \|\vv\|_2=1}\vv^T J_{f_2}(\x)\vv>0 \}.
\end{split}
\end{equation*}

Firstly, we show that $\Ic\subset \Pc$. $\forall f\in\Ic$, assume $f=f_2\circ f_1$, where $f_1\in\Rc$ and $f_2^{-1}\in\Rc$. By Lemma~\ref{lemma:inf} and Lemma~\ref{lemma:inf_inverse}, we have
\begin{align*}
    \inf_{\x\in\Rb^d,\vv\in\Rb^d,\|\vv\|_2=1}\vv^T J_{f_1}(\x)\vv &> 0,\\
    \inf_{\x\in\Rb^d,\vv\in\Rb^d,\|\vv\|_2=1}\vv^T J_{f_2}(\x)\vv &> 0.
\end{align*}
Thus, $f\in\Pc$. So $\Ic\subset \Pc$.

Next, we show that $\Pc\subset \Ic$. $\forall f\in\Pc$, assume $f=f_2\circ f_1$, where
\begin{align*}
    \inf_{\x\in\Rb^d,\vv\in\Rb^d,\|\vv\|_2=1}\vv^T J_{f_1}(\x)\vv &> 0,\\
    \inf_{\x\in\Rb^d,\vv\in\Rb^d,\|\vv\|_2=1}\vv^T J_{f_2}(\x)\vv &> 0.
\end{align*}

From Lemma~\ref{lemma:inf_twoside}, we have
\begin{align*}
    \inf_{\x\in\Rb^d,\vv\in\Rb^d,\|\vv\|_2=1}\vv^T J_{f^{-1}_2}(\x)\vv &> 0.
\end{align*}

From Lemma~\ref{lemma:eps_exist}, $\exists\ \alpha_1>0, \alpha_2>0$, s.t. $\forall\ 0<\alpha<\min\{\alpha_1,\alpha_2\}$, 
\begin{align*}
    \sup_{\x\in\Rb^d}\|\alpha J_{f_1}(\x)-I\|_2<1,\\
    \sup_{\x\in\Rb^d}\|\alpha J_{f^{-1}_2}(\x)-I\|_2<1.
\end{align*}

Let $\alpha=\frac{1}{2}\min\{\alpha_1,\alpha_2\}$. Let $g=g_2\circ g_1$, where
\begin{align*}
    g_1(\x)&=\alpha f_1(\x),\\
    g_2(\x)&=f_2(\frac{\x}{\alpha}).
\end{align*}
We have $g(\x)=f_2(\frac{\alpha f_1(\x)}{\alpha})=f(\x)$, and
\begin{align*}
    J_{g_1}(\x)&=\alpha J_{f_1}(\x),\\
    g_2^{-1}(\x)&=\alpha f_2^{-1}(\x),\\
    J_{g_2^{-1}}(\x)&=\alpha J_{f_2^{-1}}(\x).
\end{align*}
So we have
\begin{align*}
    &\sup_{\x\in\Rb^d}\|J_{g_1}(\x)-I\|_2=\sup_{\x\in\Rb^d}\|\alpha J_{f_1}(\x)-I\|_2<1,\\
    &\sup_{\x\in\Rb^d}\|J_{g_2^{-1}}(\x)-I\|_2=\sup_{\x\in\Rb^d}\|\alpha J_{f_2^{-1}}(\x)-I\|_2<1.
\end{align*}
Thus, $g_1\in\Rc$ and $g_2^{-1}\in\Rc$ and $f=g_2\circ g_1$. So $f\in\Ic$. Therefore, $\Pc\subset\Ic$.

In conclusion, $\Ic=\Pc$.
\end{proof}

\subsection{Proof For Theorem~\ref{thrm:error_bound}}

Firstly, we prove a lemma of bi-Lipschitz continuous functions.
\begin{lemma}
\label{lemma:bilip}
If $f: (\Rb^d,\|\cdot\|)\rightarrow(\Rb^d,\|\cdot\|)$ is bi-Lipschitz continuous, then
\begin{equation*}
    \frac{1}{\Lip(f^{-1})}\leq \frac{\|f(\xv_1)-f(\xv_2)\|}{\|\xv_1-\xv_2\|}\leq \Lip(f),\ \forall \xv_1,\xv_2\in \Rb^d,\xv_1\neq \xv_2.
\end{equation*}
\end{lemma}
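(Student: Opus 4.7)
The plan is to prove the two inequalities separately, each from the definition of the Lipschitz constant applied to $f$ and to $f^{-1}$ respectively. The statement is essentially a direct consequence of the definition of bi-Lipschitz continuity, so no real obstacle is expected; the only care needed is in organizing the algebraic rearrangement of the lower bound.

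First I would handle the upper bound. By the definition of $\Lip(f)$, for any $\xv_1,\xv_2\in\Rb^d$ we have
\begin{equation*}
\|f(\xv_1)-f(\xv_2)\|\leq \Lip(f)\,\|\xv_1-\xv_2\|,
\end{equation*}
and dividing both sides by $\|\xv_1-\xv_2\|>0$ (valid since $\xv_1\neq \xv_2$) immediately yields the right inequality.

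Next I would derive the lower bound by applying the Lipschitz property of $f^{-1}$ to the points $\yv_i:=f(\xv_i)$. Since $f$ is a bijection and $\xv_1\neq \xv_2$, we have $\yv_1\neq \yv_2$, and by definition of $\Lip(f^{-1})$,
\begin{equation*}
\|\xv_1-\xv_2\|=\|f^{-1}(\yv_1)-f^{-1}(\yv_2)\|\leq \Lip(f^{-1})\,\|\yv_1-\yv_2\|=\Lip(f^{-1})\,\|f(\xv_1)-f(\xv_2)\|.
\end{equation*}
Rearranging (and noting $\Lip(f^{-1})>0$ since $f^{-1}$ is not constant) gives
\begin{equation*}
\frac{1}{\Lip(f^{-1})}\leq \frac{\|f(\xv_1)-f(\xv_2)\|}{\|\xv_1-\xv_2\|},
\end{equation*}
which is the desired left inequality. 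Combining both bounds completes the proof.
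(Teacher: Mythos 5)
Your proof is correct and follows exactly the same route as the paper's: the upper bound from the definition of $\Lip(f)$, and the lower bound by applying $\Lip(f^{-1})$ to the images $f(\xv_1), f(\xv_2)$ and rearranging. No issues.
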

\begin{proof}(Proof of Lemma~\ref{lemma:bilip}).
$\forall \xv_1,\xv_2\in \Rb^d,\xv_1\neq \xv_2$, we have
\begin{align*}
    &\|f(\xv_1)-f(\xv_2)\|\leq \Lip(f)\|\xv_1-\xv_2\|\\
    &\|\xv_1-\xv_2\|=\|f^{-1}(f(\xv_1))-f^{-1}(f(\xv_2))\|\leq \Lip(f^{-1})\|f(\x_1)-f(\x_2)\|
\end{align*}
Thus, we get the results.
\end{proof}

Assume a residual flow $f=f_L\circ\cdots\circ f_1$ where each layer $f_l$ is an invertible residual network:
\begin{align*}
    f_l(\x)=\x+g_l(\x),\ \Lip(g_l)\leq\kappa<1.
\end{align*}

Thus, each layer $f_l$ is bi-Lipschitz and it follows by~\citet{behrmann2019invertible} and Lemma~\ref{lemma:bilip} that
\begin{equation}
\label{equ:resflow_lip_bound}
    1-\kappa \leq \frac{\|f_l(\xv_1)-f_l(\xv_2)\|}{\|\xv_1-\xv_2\|}\leq 1+\kappa<2^L,\ \forall \xv_1,\xv_2\in \Rb^d,\xv_1\neq \xv_2.
\end{equation}

By multiplying all the inequalities, we can get a bound of the bi-Lipschitz property for ResFlows, as shown in Lemma~\ref{lemma:resflow_bound}.

\begin{lemma}
\label{lemma:resflow_bound}
For ResFlows built by $f=f_L\circ \cdots\circ f_1$, where $f_{l}(\xv)=\xv+g_{l}(\xv),\Lip(g_l)\leq\kappa < 1$, then
\begin{equation*}
    (1-\kappa)^L \leq \frac{\|f(\xv_1)-f(\xv_2)\|}{\|\xv_1-\xv_2\|}\leq (1+\kappa)^L,\ \forall \xv_1,\xv_2\in \Rb^d,\xv_1\neq \xv_2.
\end{equation*}
\end{lemma}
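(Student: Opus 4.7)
The plan is to reduce this multi-layer bound to the single-layer bound that already underlies Eqn.~\eqref{equ:resflow_lip_bound} in the excerpt, and then telescope through the composition. For a single block $f_l = \mathrm{Id} + g_l$ with $\Lip(g_l) \leq \kappa$, the forward and reverse triangle inequalities immediately give
\begin{equation*}
(1-\kappa)\|\xv_1 - \xv_2\| \;\leq\; \|f_l(\xv_1) - f_l(\xv_2)\| \;\leq\; (1+\kappa)\|\xv_1 - \xv_2\|,
\end{equation*}
which is exactly the per-layer inequality the claimed bound is built on; I would quote this step briefly rather than rederive it.

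Next I would introduce intermediate iterates $\yv_j^{(0)} := \xv_j$ and $\yv_j^{(l)} := f_l(\yv_j^{(l-1)})$ for $j=1,2$ and $l=1,\ldots,L$, so that $\yv_j^{(L)} = f(\xv_j)$. Since $\kappa < 1$, each $f_l$ is bi-Lipschitz and in particular injective, so $\xv_1 \neq \xv_2$ forces $\yv_1^{(l)} \neq \yv_2^{(l)}$ for every $l$, and the per-layer bound can be rewritten as a ratio
\begin{equation*}
1-\kappa \;\leq\; \frac{\|\yv_1^{(l)} - \yv_2^{(l)}\|}{\|\yv_1^{(l-1)} - \yv_2^{(l-1)}\|} \;\leq\; 1+\kappa
\end{equation*}
for each $l = 1,\ldots,L$. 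Taking the product of these $L$ inequalities causes the intermediate norms to telescope and yields exactly
\begin{equation*}
(1-\kappa)^L \;\leq\; \frac{\|f(\xv_1) - f(\xv_2)\|}{\|\xv_1 - \xv_2\|} \;\leq\; (1+\kappa)^L,
\end{equation*}
which is the statement of the lemma.

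There is no real obstacle: the argument is essentially mechanical once the single-layer bound is in hand. The only point that deserves a sentence of care is confirming that none of the denominators in the telescoping product vanishes, which is immediate from $\kappa < 1$ together with the injectivity of each residual block. As a slicker alternative one could invoke Lemma~\ref{lemma:bilip} once with $\Lip(f_l) \leq 1+\kappa$ and $\Lip(f_l^{-1}) \leq 1/(1-\kappa)$ and then use $\Lip(f) \leq \prod_l \Lip(f_l)$ (and similarly for the inverse) to get both sides at once, but the direct telescoping proof is equally short and more self-contained.
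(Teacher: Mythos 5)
Your proof is correct and matches the paper's argument: both establish the per-layer bound $1-\kappa \leq \|f_l(\xv_1)-f_l(\xv_2)\|/\|\xv_1-\xv_2\| \leq 1+\kappa$ and then multiply (telescope) the $L$ inequalities to get the stated bound. Your derivation of the per-layer bound via the triangle inequality is a self-contained substitute for the paper's citation of Lemma~\ref{lemma:bilip} together with the Lipschitz bounds from \citet{behrmann2019invertible}, and your remark about the nonvanishing denominators is a small but welcome extra bit of care.
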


Next, we prove Theorem~\ref{thrm:error_bound}.
\begin{proof}(\textbf{Theorem~\ref{thrm:error_bound}})
According to the definition of $\Pc(L,r)$, we have $\Pc(L,r)\subset\Fc\subset\Ic$.


$\forall\ 0<\ell<\log_2(L)$, we have $L-2^{\ell}>0$. $\forall\ g\in\Rc_{\ell}$, by Lemma~\ref{lemma:resflow_bound}, we have
\begin{align*}
    \|g(\x)-g(\yv)\|_2\leq 2^{\ell}\|\x-\yv\|_2, \forall \x,\yv\in\Bc_r.
\end{align*}

Thus, $\forall\ \x_0\in\Bc_r$, we have
\begin{align*}
    \|f(\x)-g(\x)\|_2&=\|f(\x)-f(\x_0)+g(\x_0)-g(\x)+f(\x_0)-g(\x_0)\|_2\\
    &\geq \|f(\x)-f(\x_0)\|_2-\|g(\x_0)-g(\x)+f(\x_0)-g(\x_0)\|_2\\
    &\geq \|f(\x)-f(\x_0)\|_2-\|g(\x_0)-g(\x)\|_2-\|f(\x_0)-g(\x_0)\|_2\\
    &\geq (L-2^{\ell})\|\x-\x_0\|_2-\|f(\x_0)-g(\x_0)\|_2
\end{align*}

So
\begin{align*}
    \sup_{\x\in\Bc_r}\|f(\x)-g(\x)\|_2&\geq \sup_{\x\in\Bc_r}(L-2^{\ell})\|\x-\x_0\|_2-\|f(\x_0)-g(\x_0)\|_2\\
    &\geq (L-2^{\ell})r-\|f(\x_0)-g(\x_0)\|_2
\end{align*}

Notice that the inequality above is true for any $\x_0\in\Bc_r$, so we have
\begin{align*}
    \sup_{\x\in\Bc_r}\|f(\x)-g(\x)\|_2&\geq \sup_{\x_0\in\Bc_r}(L-2^{\ell})r-\|f(\x_0)-g(\x_0)\|_2\\
    &=(L-2^{\ell})r-\inf_{\x_0\in\Bc_r}\|f(\x_0)-g(\x_0)\|_2\\
    &\geq (L-2^{\ell})r-\sup_{\x_0\in\Bc_r}\|f(\x_0)-g(\x_0)\|_2
\end{align*}

Therefore,
\begin{align*}
    \sup_{\x\in\Bc_r}\|f(\x)-g(\x)\|_2\geq \frac{r}{2}(L-2^{\ell}), \forall g\in\Rc^{\ell}
\end{align*}

So we get
\begin{align*}
    \inf_{g\in\Rc^{\ell}}\sup_{\x\in\Bc_r}\|f(\x)-g(\x)\|_2\geq \frac{r}{2}(L-2^{\ell})
\end{align*}

Because $\forall f\in\Pc(L,r)$, $\inf_{g\in\Rc_{\ell}}\sup_{\x\in\Bc_r}\|f(\x)-g(\x)\|_2>0$, we have $\Rc^{\ell}\cap\Pc(L,r)=\varnothing$.
\end{proof}

\subsection{Proof for Equation~\ref{equ:logp_impflow}}
\label{appendix:logp_impflow}
\begin{proof}(\textbf{Equation~\ref{equ:logp_impflow}})
By Change of Variable formula:
\begin{equation*}
\log p(\xv) = \log p(\zv) + \log |\partial \zv/\partial \xv|,
\end{equation*}
Since $\zv=f(\xv)$ is defined by the equation
\begin{equation*}
F(\zv,\xv)=g_{\x}(\xv)-g_{\z}(\zv)+\x-\z=0,
\end{equation*}
by Implicit function theorem, we have
\begin{equation*}
\partial \zv/\partial \xv=J_f(\xv)=-[J_{F,\zv}(\zv)]^{-1}[J_{F,\xv}(\xv)]=(I+J_{g_{\zv}}(\zv))^{-1}(I+J_{g_{\xv}}(\xv)).
\end{equation*}
Thus,
\begin{equation*}
\log |\partial \zv/\partial \xv|=\ln|\det(I+J_{g_{\x}}(\x))|-\ln|\det(I+J_{g_{\z}}(\z))|
\end{equation*}
Note that any eigenvalue $\lambda$ of $J_{g_{\x}}(\x)$ satisfies $|\lambda|<\sigma(J_{g_{\x}}(\x))=\|J_{g_{\x}}(\x)\|_2<1$, so $\lambda\in(-1,1)$. Thus, $\det(I+J_{g_{\x}}(\x))>0$. Similarly, $\det(I+J_{g_{\z}}(\z))>0$. Therefore,
\begin{equation*}
\log |\partial \zv/\partial \xv|=\ln\det(I+J_{g_{\x}}(\x))-\ln\det(I+J_{g_{\z}}(\z))
\end{equation*}
\end{proof}

\subsection{Proof for Equation~\ref{equ:imp_grad}}
\label{appendix:imp_grad}
\begin{proof}(\textbf{Equation~\ref{equ:imp_grad}})
By implicitly differentiating two sides of $F(\z,\x;\theta)=0$ by $\x$, we have
\begin{align*}
    \pdv{g_{\x}(\x;\theta)}{\x}-\pdv{g_{\z}(\z;\theta)}{\z}\pdv{\z}{\x}+I-\pdv{\z}{\x}=0,
\end{align*}
So we have
\begin{align*}
    \pdv{\z}{\x}&=\left(I+\pdv{g_{\z}(\z;\theta)}{\z}\right)^{-1}\left(I+\pdv{g_{\x}(\x;\theta)}{\x}\right)\\
    &= J^{-1}_G(\z)\pdv{F(\z,\x;\theta)}{\x}
\end{align*}
By implicitly differentiating two sides of $F(\z,\x;\theta)=0$ by $\theta$, we have
\begin{align*}
    \pdv{g_{\x}(\x;\theta)}{\theta}-\pdv{g_{\z}(\z;\theta)}{\theta}-\pdv{g_{\z}(\z;\theta)}{\z}\pdv{\z}{\theta}-\pdv{\z}{\theta}=0,
\end{align*}
So we have
\begin{align*}
    \pdv{\z}{\theta}&=\left(I+\pdv{g_{\z}(\z;\theta)}{\z}\right)^{-1}\left(\pdv{g_{\x}(\x;\theta)}{\theta}-\pdv{g_{\z}(\z;\theta)}{\theta}\right)\\
    &= J^{-1}_G(\z)\pdv{F(\z,\x;\theta)}{\theta}
\end{align*}
Therefore, the gradient from $\z$ to $(\cdot)$ is
\begin{equation*}
    \pdv{\Lc}{\z}\pdv{\z}{(\cdot)}=\pdv{\Lc}{\z}J^{-1}_G(\z)\pdv{F(\z,\x;\theta)}{(\cdot)}.
\end{equation*}
\end{proof}

\section{Other Properties of Implicit Flows}
In this section, we propose some other properties of ImpFlows.
\label{appendix:impflow_other}
\begin{lemma}
\label{lemma:impflow_bound}
For a single implicit flow $f\in\Ic$, assume that $f=f_2^{-1}\circ f_1$, where 
\begin{align}
    &f_1(\x)=\x+g_1(\x),\ \Lip(g_1)\leq\kappa<1,\\
    &f_2(\x)=\x+g_2(\x),\ \Lip(g_2)\leq\kappa<1,
\end{align}
then
\begin{equation}
    \frac{1-\kappa}{1+\kappa} \leq \frac{\|f(\xv_1)-f(\xv_2)\|}{\|\xv_1-\xv_2\|}\leq \frac{1+\kappa}{1-\kappa},\ \forall \xv_1,\xv_2\in \Rb^d,\xv_1\neq \xv_2.
\end{equation}
\end{lemma}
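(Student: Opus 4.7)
The plan is to combine the two-sided bi-Lipschitz bound on a single ResFlow block, stated in Eqn.~\eqref{equ:resflow_lip_bound}, with the elementary identity for the norm ratio of a composition $f = f_2^{-1} \circ f_1$. Concretely, for any $\xv_1 \neq \xv_2$, set $\yv_i = f_1(\xv_i)$. Since $f_1$ is injective (as a ResFlow block, it is bi-Lipschitz), $\yv_1 \neq \yv_2$, and I can factor
\[
\frac{\|f(\xv_1)-f(\xv_2)\|}{\|\xv_1-\xv_2\|}
=\frac{\|f_2^{-1}(\yv_1)-f_2^{-1}(\yv_2)\|}{\|\yv_1-\yv_2\|}\cdot\frac{\|f_1(\xv_1)-f_1(\xv_2)\|}{\|\xv_1-\xv_2\|}.
\]
So the task reduces to two-sided control of each factor.

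For the second factor, Eqn.~\eqref{equ:resflow_lip_bound} applied to $f_1$ gives directly $1-\kappa \leq \|f_1(\xv_1)-f_1(\xv_2)\|/\|\xv_1-\xv_2\| \leq 1+\kappa$. For the first factor, I would invoke Lemma~\ref{lemma:bilip} applied to $f_2$ (not $f_2^{-1}$), which yields
\[
\frac{1}{\Lip(f_2)} \leq \frac{\|f_2^{-1}(\yv_1)-f_2^{-1}(\yv_2)\|}{\|\yv_1-\yv_2\|} \leq \Lip(f_2^{-1}).
\]
Since $f_2$ is a ResFlow block, $\Lip(f_2) \leq 1 + \kappa$ by the triangle inequality and $\Lip(g_2) \leq \kappa$; the lower bound $\Lip(f_2^{-1}) \leq 1/(1-\kappa)$ is exactly the content of \citet[Lemma 2]{behrmann2019invertible}, which has already been cited in the paper. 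These two inputs give $1/(1+\kappa) \leq \|f_2^{-1}(\yv_1)-f_2^{-1}(\yv_2)\|/\|\yv_1-\yv_2\| \leq 1/(1-\kappa)$.

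Multiplying the two bounds then yields the desired
\[
\frac{1-\kappa}{1+\kappa} \leq \frac{\|f(\xv_1)-f(\xv_2)\|}{\|\xv_1-\xv_2\|} \leq \frac{1+\kappa}{1-\kappa}.
\]
There is no real obstacle, as every step is a direct appeal to a previously established result; the only thing one has to be careful about is verifying injectivity of $f_1$ so that the factorization is well defined, and noting that the bounds on $\Lip(f_2)$ and $\Lip(f_2^{-1})$ come from two different sources (triangle inequality for the former, the contractive-perturbation-of-identity argument of \citet{behrmann2019invertible} for the latter).
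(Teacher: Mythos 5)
Your proof is correct and is essentially the paper's own argument: the paper applies the bi-Lipschitz bound of Eqn.~\eqref{equ:resflow_lip_bound} to $f_1$ and (after inversion) to $f_2$, then multiplies the two resulting two-sided bounds. Your re-derivation of the $f_2^{-1}$ bound via Lemma~\ref{lemma:bilip}, $\Lip(f_2)\leq 1+\kappa$, and $\Lip(f_2^{-1})\leq 1/(1-\kappa)$ is just unpacking what Eqn.~\eqref{equ:resflow_lip_bound} already encapsulates, so this is the same route.
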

\begin{proof}(Proof of \textbf{Lemma~\ref{lemma:impflow_bound}})
According to Eqn.~\eqref{equ:resflow_lip_bound}, we have
\begin{align}
    1-\kappa &\leq \frac{\|f_1(\xv_1)-f_1(\xv_2)\|}{\|\xv_1-\xv_2\|}\leq 1+\kappa,\ \forall \xv_1,\xv_2\in \Rb^d,\xv_1\neq \xv_2,\\
    \frac{1}{1+\kappa} &\leq \frac{\|f_2^{-1}(\xv_1)-f_2^{-1}(\xv_2)\|}{\|\xv_1-\xv_2\|}\leq \frac{1}{1-\kappa},\ \forall \xv_1,\xv_2\in \Rb^d,\xv_1\neq \xv_2.
\end{align}
By multiplying these two inequalities, we can get the results.
\end{proof}

\begin{theorem}(Limitation of the single ImpFlow).
\label{thrm:limitation}
\begin{equation}
    \Ic\subset \{f: f\in\Dc,\forall\x\in\Rb^d, \lambda(J_f(\x))\cap \Rb^{-}=\varnothing\},
\end{equation}
where $\lambda(A)$ denotes the set of all eigenvalues of matrix $A$.
\end{theorem}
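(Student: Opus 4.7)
The plan is to reduce the statement to a clean linear-algebra fact about products of matrices with strictly positive numerical range. By Theorem~\ref{thrm:imp_functions}, any $f \in \Ic$ factors as $f = f_2 \circ f_1$ with $f_1, f_2 \in \Fc$, so the chain rule yields $J_f(\x) = J_{f_2}(f_1(\x)) \, J_{f_1}(\x)$ at every $\x \in \Rb^d$. Setting $A = J_{f_2}(f_1(\x))$ and $B = J_{f_1}(\x)$, the definition of $\Fc$ guarantees $\vv^T A \vv > 0$ and $\vv^T B \vv > 0$ for all nonzero $\vv \in \Rb^d$. Moreover, $f \in \Dc$ is automatic since $\Fc \subset \Dc$ and $\Dc$ is closed under composition of bi-Lipschitz $C^1$-diffeomorphisms. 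Thus the theorem reduces to the following claim: \emph{if $A, B \in \Rb^{d\times d}$ both satisfy $\vv^T A\vv > 0$ and $\vv^T B\vv > 0$ for all $\vv \in \Rb^d \setminus \{0\}$, then $AB$ has no negative real eigenvalue.}

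I would prove this claim by contradiction. Suppose $-\mu$ is a real eigenvalue of $AB$ with $\mu > 0$. Since $AB + \mu I$ is a real singular matrix, its null space contains a real vector, so we may pick $\vv \in \Rb^d \setminus \{0\}$ with $AB\vv = -\mu\vv$. Put $\uv = B\vv$; since $A\uv = -\mu\vv \neq 0$ we have $\uv \neq 0$. The key computation is
\begin{equation*}
\uv^T A \uv \;=\; \vv^T B^T A \uv \;=\; \vv^T B^T(-\mu\vv) \;=\; -\mu\,\vv^T B^T \vv \;=\; -\mu\,\vv^T B \vv,
\end{equation*}
where the last equality uses that the scalar $\vv^T B \vv$ equals its transpose $\vv^T B^T \vv$. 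The left-hand side is strictly positive by the hypothesis on $A$ applied to $\uv \neq 0$, while the right-hand side is strictly negative since $\mu > 0$ and $\vv^T B \vv > 0$. This contradiction proves the claim and hence the theorem.

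I do not anticipate any serious obstacle. The two points that warrant care are (i) observing that a real matrix with a real eigenvalue admits a real eigenvector, which allows the whole argument to stay inside $\Rb^d$; and (ii) applying the positivity hypothesis of $\Fc$ to the \emph{transformed} vector $\uv = B\vv$ rather than to $\vv$ itself, which is exactly what forces the use of both hypotheses in a single chain of equalities. The strict inequalities in the definition of $\Fc$ are essential, since replacing them with non-strict ones would collapse the contradiction at the boundary.
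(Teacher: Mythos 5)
Your proposal is correct and follows essentially the same route as the paper: decompose $f$ via Theorem~\ref{thrm:imp_functions}, apply the chain rule to express $J_f(\x)$ as a product $AB$ of two matrices with strictly positive numerical range, and derive a contradiction from a putative negative eigenvalue by evaluating the quadratic form $\uv^T A\uv$ at $\uv = B\vv$ for an eigenvector $\vv$. Your version is marginally more explicit in noting that a real eigenvalue of a real matrix admits a real eigenvector, and in deriving $\uv \neq 0$ from $A\uv = -\mu\vv \neq 0$ rather than from the nonsingularity of $B$, but these are cosmetic differences.
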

\label{sec:appendix:limitation}
\begin{proof}(Proof of \textbf{Theorem~\ref{thrm:limitation}})

Proof by contradiction. Assume $\exists f \in \Ic$ and $\xv\in \Rb^d$, s.t. $\exists \lambda \in \lambda(J_f(\x)), \lambda<0$.

There exists a vector $\uv \neq 0, J_f(\x)\uv=\lambda\uv$.
By Theorem~\ref{thrm:imp_functions}, $\exists f_1,f_2 \in \Fc, f=f_2\circ f_1$, hence $ J_f(\x)=J_{f_2}(f_1(\x))J_{f_1}(\x)$.
We denote $A:=J_{f_2}(f_1(\x)), B:=J_{f_1}(\x)$.
Since $ f_1,f_2 \in \Fc$, we have
\begin{equation*}
\vv^T A \vv>0, \wv^T B\wv>0 ,\forall \vv,\wv\neq 0, \vv,\wv\in\Rb^d.
\end{equation*}
Note that B is the Jacobian of a bi-Lipschitz function at a single point, so B is non-singular. As $\uv\neq 0$, we have $B\uv \neq 0$.
Thus,
\begin{equation*}
(B\uv)^TA(B\uv)=(B\uv)^T((AB)\uv)=\lambda \uv^T B^T \uv=\lambda \uv^T B \uv
\end{equation*}
The last equation uses this fact: for a real $d\times d$ matrix $A$, $\forall \x\in\Rb^d, \x^TA\x=(\x^TA\x)^T=\x^TA^T\x$ because $\x^TA\x\in\Rb$. Note that the left side is positive, and the right side is negative. It's a contradiction.
\end{proof}

Therefore, $\Ic$ cannot include all the bi-Lipschitz $C^1$-diffeomorphisms. As a corollary, we have $\Rc_3\not\subset\Ic$.

\begin{corollary}
\label{cor:R3}
$\Rc_3\not\subset\Ic$.
\end{corollary}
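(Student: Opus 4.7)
The plan is to derive this directly from Theorem~\ref{thrm:limitation}, which forces every $f\in\Ic$ to have $\lambda(J_f(\xv))\cap\Rb^{-}=\varnothing$ at every $\xv$. So it suffices to exhibit some $f\in\Rc_3$ whose Jacobian at a single point admits a strictly negative real eigenvalue; such an $f$ witnesses $\Rc_3\not\subset\Ic$. Note that the strategy cannot succeed in dimension $d=1$, since the derivative of any element of $\Rc_3$ is a product of three strictly positive derivatives; so the construction is inherently at least two-dimensional, and I would work in $\Rb^2$ and extend by the identity on the extra coordinates for $d\geq 3$.

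For the construction I would take each $f_i$ to be linear, $f_i(\xv)=M_i\xv$, so that $f_i\in\Rc$ iff $\|M_i-I\|_2<1$, and $f=f_3\circ f_2\circ f_1$ is linear with constant Jacobian $M_3M_2M_1$. I would choose $M_1=M_3=R_{\pi/3-\epsilon}$ (rotation by just under $60^{\circ}$) and $M_2=\diag(1+a,1-a)$ with $a\in(\sqrt{3}/2,1)$. The Lipschitz constraints reduce to $\|M_1-I\|_2=2\sin(\pi/6-\epsilon/2)<1$ and $\|M_2-I\|_2=|a|<1$, both immediate. The key eigenvalue computation uses cyclicity of the trace together with $R_{\pi/3}^2=R_{2\pi/3}$: at $\epsilon=0$ this yields $\Tr(M_3M_2M_1)=-1$ and $\det(M_3M_2M_1)=1-a^2$, so the characteristic polynomial $\lambda^2+\lambda+(1-a^2)$ has discriminant $4a^2-3>0$ and two real roots in $(-1,0)$. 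Hence both eigenvalues are strictly negative real.

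The main obstacle is the boundary issue: at the natural angle $\theta=\pi/3$ one has $\|R_\theta-I\|_2=1$, which lies on the boundary excluded from $\Rc$, forcing the perturbation to $\theta=\pi/3-\epsilon$. I would then argue that the conclusion persists for small $\epsilon>0$ by continuity of eigenvalues in the matrix entries: the conditions ``discriminant positive'' and ``trace negative with determinant positive'' are open, so they are preserved for all sufficiently small $\epsilon$. Finally, for any $d\geq 2$ the block matrices $\tilde M_i=M_i\oplus I_{d-2}$ satisfy $\|\tilde M_i-I\|_2<1$, and $\tilde M_3\tilde M_2\tilde M_1=(M_3M_2M_1)\oplus I_{d-2}$ inherits the negative real eigenvalues of the $2\times 2$ block, giving the desired $f\in\Rc_3\setminus\Ic$.
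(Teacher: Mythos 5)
Your proof is correct and follows the paper's exact strategy: invoke Theorem~\ref{thrm:limitation} and exhibit a composition of three linear maps in $\Rc$ on $\Rb^2$ whose product Jacobian has negative real eigenvalues. The paper hard-codes three numerically specific $2\times 2$ matrices, whereas your rotation--stretch--rotation construction with $M_1=M_3=R_{\pi/3-\epsilon}$ and $M_2=\diag(1+a,1-a)$, $a\in(\sqrt3/2,1)$, is more transparent (the trace $-1$ and determinant $1-a^2$ drop out cleanly via $\Tr(M_3M_2M_1)=\Tr(R_{2\pi/3-2\epsilon}M_2)$) and usefully observes that the counterexample is inherently at least two-dimensional; the underlying argument is otherwise the same.
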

\begin{proof}(Proof for \textbf{Corollary~\ref{cor:R3}})
Consider three linear functions in $ \Rc$:
\begin{equation*}
f_1(\xv)=\xv+\left(\begin{array}{ll}
-0.46 & -0.20 \\
0.85 & 0.00
\end{array}\right) \xv
\end{equation*}
\begin{equation*}
f_2(\xv)=\xv+\left(\begin{array}{ll}
-0.20 & -0.70 \\
0.30 & -0.60
\end{array}\right)\xv
\end{equation*}
\begin{equation*}
f_3(\xv)=\xv+\left(\begin{array}{ll}
-0.50 & -0.60 \\
-0.20 & -0.55
\end{array}\right)\xv
\end{equation*}
We can get that $f=f_1\circ f_2\circ f_3$ is in $\Rc_3$, and $f$ is also a linear function with Jacobian $\left(\begin{array}{ll}
0.2776 & -0.4293 \\
0.5290 & -0.6757
\end{array}\right)$
However, this is a matrix with two negative eigenvalues: -0.1881, -0.2100. Hence $f$ is not in $\Ic$. Therefore, $\Rc_3\not\subset\Ic$.
\end{proof}

\section{Computation}
\subsection{Approximate Inverse Jacobian}
\label{appendix:computation}
The exact computation for the Jacobian inverse term costs much for high dimension tasks. We use the similar technique in \citet{bai2019deep} to compute $\pdv{\Lc}{\z}J_G^{-1}(\z)$: solving the linear system of variable $\yv$:
\begin{equation}
    J^T_{G}(\z)\yv^T=(\pdv{\Lc}{\z})^T,
\end{equation}
where the left hand side is a vector-Jacobian product and it can be efficiently computed by autograd packages foy any $\yv$ without computing the Jacobian matrix. In this work, we also use Broyden's method to solve the root, the same as methods in the forward pass, where the tolerance bound for the stop criterion is $\epsilon_b$.

\begin{remark}
Although the forward, inverse and backward pass of ImpFlows all need to solve the root of some equation, we can choose small enough $\epsilon_f$ and $\epsilon_b$ to ensure the approximation error is small enough. Thus, there is a trade-off between computation costs and approximation error. In practice, we use $\epsilon_f=10^{-6}$ and $\epsilon_b=10^{-10}$ and empirically does not observe any error accumulation. Note that such approximation is rather different from the variational inference technique in \citet{chen2020vflow,nielsen2020survae}, because we only focus on the exact log density itself.
\end{remark}

\subsection{Computation Time}
\label{appendix:computation_time}
\begin{table}[t]
    \centering
    \caption{Single-batch computation time (seconds) for ResFlow and ImpFlow in Table~\ref{tab:exp_cifar10} on a single Tesla P100 (SXM2-16GB).}
    \vspace{.1cm}
    \begin{tabular}{c|c|c|c|c|c|c|c|c}
    \toprule
    $c$ & Model & \multicolumn{3}{c|}{Forward (Inference)} & \multicolumn{2}{c|}{Backward} & Training & Sample \\
   
    \midrule
    \multirow{4}{*}{0.5} & \multirow{3}{*}{ImpFlow} & Fixed-point & Log-det & Others & Inv-Jacob & Others  & \multirow{3}{*}{4.152} & \multirow{3}{*}{0.138}\\
    \cline{3-7}
    & & 0.445 & 2.370 & 0.090 & 0.562 & 0.441 & &\\
    \cline{3-7}
    & & \multicolumn{3}{c|}{2.905} & \multicolumn{2}{c|}{1.003} & &\\
    \cline{2-9}
    & ResFlow & \multicolumn{3}{c|}{2.656} & \multicolumn{2}{c|}{0.031} & 2.910 & 0.229\\
    
    \midrule
    \multirow{4}{*}{0.6} & \multirow{3}{*}{ImpFlow} & Fixed-point & Log-det & Others & Inv-Jacob & Others  & \multirow{3}{*}{4.415} & \multirow{3}{*}{0.159}\\
    \cline{3-7}
    & & 0.497 & 2.356 & 0.120 & 0.451 & 0.800 & \\
    \cline{3-7}
    & & \multicolumn{3}{c|}{2.973} & \multicolumn{2}{c|}{1.251} & &\\
    \cline{2-9}
    & ResFlow & \multicolumn{3}{c|}{2.649} & \multicolumn{2}{c|}{0.033} & 2.908 & 0.253\\    
    
    \midrule
    \multirow{4}{*}{0.7} & \multirow{3}{*}{ImpFlow} & Fixed-point & Log-det & Others & Inv-Jacob & Others  & \multirow{3}{*}{4.644} & \multirow{3}{*}{0.181}\\
    \cline{3-7}
    & & 0.533 & 2.351 & 0.157 & 0.525 & 0.887 & \\
    \cline{3-7}
    & & \multicolumn{3}{c|}{3.041} & \multicolumn{2}{c|}{1.412} & &\\
    \cline{2-9}
    & ResFlow & \multicolumn{3}{c|}{2.650} & \multicolumn{2}{c|}{0.030} & 2.908 & 0.312\\  
    
    \midrule
    \multirow{4}{*}{0.8} & \multirow{3}{*}{ImpFlow} & Fixed-point & Log-det & Others & Inv-Jacob & Others  & \multirow{3}{*}{4.881} & \multirow{3}{*}{0.206}\\
    \cline{3-7}
    & & 0.602 & 2.364 & 0.139 & 0.641 & 0.943 & \\
    \cline{3-7}
    & & \multicolumn{3}{c|}{3.105} & \multicolumn{2}{c|}{1.584} & &\\
    \cline{2-9}
    & ResFlow & \multicolumn{3}{c|}{2.655} & \multicolumn{2}{c|}{0.030} & 2.910 & 0.374\\  
    
    \midrule
    \multirow{4}{*}{0.9} & \multirow{3}{*}{ImpFlow} & Fixed-point & Log-det & Others & Inv-Jacob & Others  & \multirow{3}{*}{5.197} & \multirow{3}{*}{0.258}\\
    \cline{3-7}
    & & 0.707 & 2.357 & 0.137 & 0.774 & 1.033 & \\
    \cline{3-7}
    & & \multicolumn{3}{c|}{3.201} & \multicolumn{2}{c|}{1.807} & &\\
    \cline{2-9}
    & ResFlow & \multicolumn{3}{c|}{2.653} & \multicolumn{2}{c|}{0.030} & 2.916 & 0.458\\  
     
    \bottomrule
    \end{tabular}
    \label{tab:computation_time}
\end{table}
We evaluate the average computation time for the model trained on CIFAR10 in Table~\ref{tab:exp_cifar10} on a single Tesla P100 (SXM2-16GB). See Table~\ref{tab:computation_time} for the details. For a fair comparision, the forward (inference) time in the training phase of ImpFlow is comparable to that of ResFlow because the log-determinant term is the main cost. The backward time of ImpFlow costs more than that of ResFlow because it requires to rewrite the backward method in PyTorch to solve the linear equation. The training time includes forward, backward and other operations (such as the Lipschitz iterations for spectral normalization). We use the same method as the release code of ResFlows (fixed-point iterations with tolerance $10^{-5}$) for the sample phase. The sample time of ImpFlow is less than that of ResFlow because the inverse of $L$-block ImpFlow needs to solve $L$ fixed points while the inverse of $2L$-block ResFlow needs to solve $2L$ fixed points. Fast sampling is particularly desirable since it is the main advantage of flow-based models over autoregressive models.
\begin{table}[t]
    \centering
    \caption{Single-batch iterations of Broyden's method during forward and backward pass for a single block of ImpFlow in Table~\ref{tab:exp_cifar10}.}
    \vspace{.1cm}
    \begin{tabular}{c|l|c|c}
    \toprule
    $c$ &  & Broyden's Method Iterations & Function Evaluations\\
    \midrule
    \multirow{2}{*}{0.5} & Forward & 7.2 & 8.2\\
    & Backward & 12.5 & 13.5\\
    
    \midrule
    \multirow{2}{*}{0.6} & Forward & 8.3 & 9.3\\
    & Backward & 14.9 & 15.9\\
    
    \midrule
    \multirow{2}{*}{0.7} & Forward & 9.4 & 10.4\\
    & Backward & 17.9 & 18.9\\

    \midrule
    \multirow{2}{*}{0.8} & Forward & 10.8 & 11.8\\
    & Backward & 22.4 & 23.4\\
    
    \midrule
    \multirow{2}{*}{0.9} & Forward & 12.9 & 13.9\\
    & Backward & 27.4 & 28.4\\
    \bottomrule
    \end{tabular}
    \label{tab:broyden_step}
\end{table}

Also, we evaluate the average Broyden's method iterations and the average function evaluation times during the Broyden's method. See Table~\ref{tab:broyden_step} for the details.

\subsection{Numerical Sensitivity}
\begin{table}[t]
    \centering
    \caption{Average test log-likelihood (in nats) for different $\epsilon_f$ of ImpFlow on POWER dataset.}
    \vspace{.1cm}
    \begin{tabular}{c|cccccccc}
    \toprule
    $\epsilon_f$ & $10^{-8}$ & $10^{-7}$& $10^{-6}$& $10^{-5}$& $10^{-4}$& $10^{-3}$& $10^{-2}$& $10^{-1}$ \\
    \midrule
    log-likelihood & 0.606 & 0.603 & 0.607 & 0.611 & 0.607 & 0.607 & 0.602 & 0.596\\
    \bottomrule
    \end{tabular}
    \label{tab:numerical_sens}
\end{table}
We train a $20$-block ImpFlow on POWER dataset with $\epsilon_f=10^{-6}$ (see Appendix.~\ref{appendix:network} for detailed settings), and then test this model with different $\epsilon_f$ to see the numerical sensitivity of the fixed-point iterations. Table~\ref{tab:numerical_sens} shows that our model is not sensitive to $\epsilon_f$ in a fair range.

\subsection{Training algorithm}
\label{appendix:algorithm}
We state the training algorithms for both forward and backward processes in Algorithm~\ref{algo:forward} and Algorithm~\ref{algo:backward}

\begin{algorithm}
\label{algo:forward}
\caption{Forward Algorithm For a Single-Block ImpFlow}
\textbf{Require:} $g_{\x;\theta},g_{\z;\theta}$ in Eqn.~\eqref{equ:imp_func}, stop criterion $\epsilon_f$. \\
\KwIn{$\x$.}
\KwOut{$\z = f(\x)$ and $\ln p(\x)$, where $f$ is the implicit function defined by $g_{\x;\theta}$ and $g_{\z;\theta}$.}
Define $h(\z)=F(\z,\x;\theta)$\\
$\z\leftarrow \vect{0}$\\
\While{$\|h(\z)\|_2\geq\epsilon_f$}{
    $B\leftarrow$ The estimated inverse Jacobian of $h(\z)$ (e.g. by Broyden's method) \\
    $\alpha\leftarrow\mathrm{LineSearch}(\z,h,B)$\\
    $\z\leftarrow \z-\alpha B h(\z)$
}
\eIf{training}{
    Esitamate $\ln\det(I+J_{g_{\x}}(\x;\theta))$ by Eqn.~\eqref{equ:train_logp}\\
    Esitamate $\ln\det(I+J_{g_{\z}}(\z;\theta))$ by Eqn.~\eqref{equ:train_logp}\\
}
{
    Esitamate $\ln\det(I+J_{g_{\x}}(\x;\theta))$ by Eqn.~\eqref{equ:test_logp}\\
    Esitamate $\ln\det(I+J_{g_{\z}}(\z;\theta))$ by Eqn.~\eqref{equ:test_logp}\\
}
Compute $\ln p(\x)$ by Eqn.~\eqref{equ:logp_impflow}\\
\end{algorithm}

\begin{algorithm}
\label{algo:backward}
    \caption{Backward Algorithm For a Single-Block ImpFlow}
    \textbf{Require:} $g_{\x;\theta},g_{\z;\theta}$ in Eqn.~\eqref{equ:imp_func}, stop criterion $\epsilon_b$. \\
    \KwIn{$\x,\z,\pdv{\Lc}{\z}$.}
    \KwOut{The gradient for $\x$ and $\theta$ from $\z$, i.e. $\pdv{\Lc}{\z}\pdv{\z}{\x}$ and $\pdv{\Lc}{\z}\pdv{\z}{\theta}$.}
    Define $G(\z;\theta)=g_{\z}(\z;\theta)+\z$ and $h(\yv)=\yv J_G(\z)-\pdv{\Lc}{\z}$\\
    $\yv\leftarrow \vect{0}$\\
    \While{$\|h(\yv)\|_2\geq\epsilon_b$}{
        $B\leftarrow$ The estimated inverse Jacobian of $h(\yv)$ (e.g. by Broyden's method) \\
        $\alpha\leftarrow\mathrm{LineSearch}(\yv,h,B)$\\
        $\yv\leftarrow \yv-\alpha B h(\yv)$
    }
    Compute $\pdv{F(\z,\x;\theta)}{\x}$ and $\pdv{F(\z,\x;\theta)}{\theta}$ by autograd packages.\\
    $\pdv{\Lc}{\z}\pdv{\z}{\x}\leftarrow\yv\pdv{F(\z,\x;\theta)}{\x}$\\
    $\pdv{\Lc}{\z}\pdv{\z}{\theta}\leftarrow\yv\pdv{F(\z,\x;\theta)}{\theta}$
\end{algorithm}

\section{Network Structures}
\label{appendix:network}

\subsection{1-D example}
We specify the function (data) to be fitted is
\begin{align*}
    f(x)=\left\{
        \begin{array}{rcl}
        0.1x,       &      & {x<0}\\
        10x,       &      & {x\geq0}
    \end{array} \right.
\end{align*}
For $\Ic$, we can construct a fully-connected neural network with ReLU activation and $3$ parameters as following:
\begin{align*}
    g_{x}(x)&=\mathrm{ReLU}(-0.9x)\\
    g_{z}(z)&=-\sqrt{0.9}\mathrm{ReLU}(\sqrt{0.9}z)
\end{align*}
The two networks can be implemented by spectral normalization. Assume the implicit function defined by Eqn.~\eqref{equ:imp_func} using the above $g_{x}(x)$ and $g_{z}(z)$ is $f_{\Ic}$. Next we show that $f=f_{\Ic}$.

Let $f_1(x)=x+\mathrm{ReLU}(-0.9x)$ and $f_2(x)=x-\sqrt{0.9}\mathrm{ReLU}(\sqrt{0.9}x)$, we have $f_2^{-1}(x)=x+\mathrm{ReLU}(9x)$. Therefore, $f_{\Ic}=f_2^{-1}\circ f_1=f$.

For every residual block of $\Rc$,$\Rc_2$ and $\Rc_3$, we train a $4$-layer MLP with ReLU activation and $128$ hidden units, and the Lipschitz coefficient for the spectral normalization is $0.99$, and the iteration number for the spectral computation is $200$. The objective function is
\begin{align*}
    \min_{\theta} \Eb_{x\sim\mathrm{Unif}(-1,1)}\left[(f_{\theta}(x)-f(x))^2\right],
\end{align*}
where $f_{\theta}$ is the function of 1 or 2 or 3 residual blocks. We use a batch size of $5000$. We use the Adam optimizer, with learning rate $10^{-3}$ and weight decay $10^{-5}$. We train the model until convergence, on a single NVIDIA GeForce GTX 1080Ti.

The losses for $\Rc$,$\Rc_2$ and $\Rc_3$ are $5.25,2.47,0.32$, respectively.

\subsection{Classification}
For the classification tasks, we remove all the BatchNorm layers which are inside of a certain ResBlock, and only maintain the BatchNorm layer in the downsampling layer. Moreover, as a single ImpFlow consists of two residual blocks with the same dimension of input and output, we replace the downsampling shortcut by a identity shortcut in each scale of ResNet-18, and add a downsampling layer (a convolutional layer) with BatchNorm after the two residual blocks of each scale. Thus, each scale consists of two ResBlocks with the same dimension of input and output, which ($6.5$M parameters) is different from the vanilla ResNet-18 architecture ($11.2$M parameters). Note that the ``vanilla ResNet-18'' in our main text is refered to the $6.5$M-parameter architecture, which is the same as the versions for ResFlow and ImpFlow.

We use the comman settings: batch size of $128$, Adam optimizer with learning rate $10^{-3}$ and no weight decay, and total epoch of $150$. For the spectral normalization iterations, we use a error bound of $10^{-3}$, the same as \citet{chen2019residual}. We train every experiment on a single NVIDIA GeForce GTX 2080Ti.

\subsection{Density Modeling on Toy 2D Data}
Following the same settings as \citet{chen2019residual}, we use 4-layer multilayer perceptrons (MLP) with fully-connected layers of $128$ hidden units. We use the Adam optimizer with learning rate of $10^{-3}$ and weight decay of $10^{-5}$. Moreover, we find that $\frac{1}{2\pi}\sin(2\pi\x)$ is a better activation for this task while maintain the property of $1$-Lipschitz constant, so we use this activation function for all experiments, which can lead to faster convergence and better log-likelihood for both ResFlows and ImpFlows, as shown in Fig.~\ref{fig:activation}.

We do not use any ActNorm or BatchNorm layers. For the log-determinant term, we use brute-force computation as in \citet{chen2019residual}. For the forward and backward, we use the Broyden's method to compute the roots, with $\epsilon_f=10^{-6}$. The Lipschitz coefficient for spectral normalization is $0.999$, and the iteration number for spectral normalization is $20$. The batch size is $5000$, and we train $50000$ iterations. The test batch size is $10000$.

Also, we vary the network depth to see the difference between ImpFlow and ResFlow. For every depth $L$, we use an $L$-block ImpFlow and a $2L$-block ResFlow with the same settings as stated above, and train 3 times with different random seeds. As shown in Figure~\ref{fig:toy_depth}, the gap between ImpFlow and ResFlow shrinks as the depth grows deep, because the Lipschitz constant of ResFlow grows exponentially. Note that the dashed line is a 200-block ResFlow in \citet{chen2019residual}, and we tune our model better so that our models perform better with lower depth.

\begin{figure}[ht]
\centering
\begin{minipage}{0.4\linewidth}
	\centering
	\includegraphics[width=.8\linewidth]{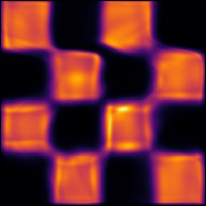}\\
	(a) LipSwish (5.22 bits)
\end{minipage}
\begin{minipage}{0.4\linewidth}
	\centering
	\includegraphics[width=.8\linewidth]{figures/checker_resflow_sin.jpg}\\
	(b) Sine (5.08 bits)
\end{minipage}
	\caption{8-block ResFlow with different activation function trained on Checkerboard dataset.\label{fig:activation}}
\end{figure}
\begin{figure}[ht]
    \centering
    \includegraphics[width=.5\linewidth]{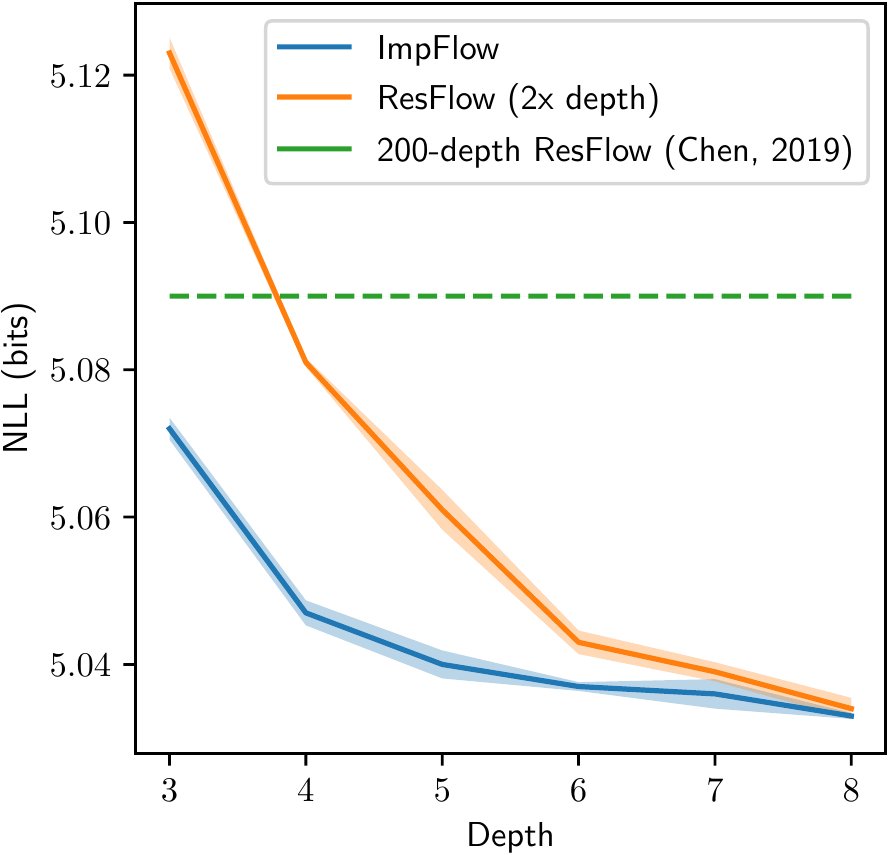}
    \caption{Test NLL (in bits) by varying the network depth. Lower is better.}
    \label{fig:toy_depth}
\end{figure}

\subsection{Density Modeling on Tabular Datasets}
We use the same data preprocessing as~\citet{papamakarios2017masked}, including the train/valid/test datasets splits. For all models, we use a batch size of $1000$ (both training and testing) and learning rate of $10^{-3}$ for the Adam optimizer. The main settings are the same as \citet{chen2019residual} on the toy2D dataset. The residual blocks are 4-layer MLPs with 128 hidden units. The ResFlows use 10 blocks and ImpFlows use 5 blocks to ensure the same amount of parameters. And we use a 20-block ImpFlow for a better result. Also, we use the Sine activation as $\frac{1}{2\pi}\sin(2\pi\x)$. We do not use any ActNorm or BatchNorm layers. For the Lipschitz coefficient, we use $c=0.9$ and the iteration error bound for spectral normalization is $10^{-3}$.

For the settings of our scalable algorithms, we use brute-force computation of the log-determinant term for POWER and GAS datasets and use the same estimation settings as \citet{chen2019residual} for HEPMASS, MINIBOONE and BSDS300 datasets. In particular, for the estimation settings, we always exactly compute $2$ terms in training process and $20$ terms in testing process for the log-determinant series. We use a geometric distribution of $p=0.5$ for the distribution $p(N)$ for the log-determinant term. We use a single sample of $(n,\vv)$ for the log-determinant estimators for both training and testing.

We train each expeirment on a single NVIDIA GeForce GTX 2080Ti for about 4 days for ResFlows and 6 days for ImpFlows. For 20-block ImpFlow, we train our model for about 2 weeks. However, we find that the 20-block ImpFlow will overfit the training dastaset for MINIBOONE because this dataset is quite small, so we use the early-stopping technique.

\subsection{Density Modeling on Image Datasets}
For the CIFAR10 dataset, we follow the same settings and architectures as \citet{chen2019residual}. In particular, every convolutional residual block is
\begin{align*}
    \mathrm{LipSwish}\rightarrow \mathrm{3\times 3}\ \mathrm{Conv}\rightarrow\mathrm{LipSwish}\rightarrow\mathrm{1\times 1}\ \mathrm{Conv}\rightarrow\mathrm{LipSwish}\rightarrow\mathrm{3\times 3}\ \mathrm{Conv}.
\end{align*}
The total architecture is
\begin{align*}
    \mathrm{Image}\rightarrow\mathrm{LogitTransform(\alpha)}\rightarrow k\times\mathrm{ConvBlock}\rightarrow [\mathrm{Squeeze}\rightarrow k\times\mathrm{ConvBlock}]\times 2,
\end{align*}
where ConvBlock is i-ResBlock for ResFlows and ImpBlock for ImpBlock, and $k=4$ for ResFlows and $k=2$ for ImpFlows. And the first ConvBlock does not have LipSwish as pre-activation, followed as \citet{chen2019residual}. We use ActNorm2d after every ConvBlock. We do not use the FC layers~\citep{chen2019residual}. We use hidden channels as $512$. We use batch size of $64$ and the Adam optimizer of learning rate $10^{-3}$. The iteration error bound for spectral normalization is $10^{-3}$. We use $\alpha=0.05$ for CIFAR10.

For the settings of our scalable algorithms, we use the same as \citet{chen2019residual} for the log-determinant terms. In particular, we always exactly compute $10$ terms in training process and $20$ terms in testing process for the log-determinant series. We use a possion distribution for the distribution $p(N)$ for the log-determinant term. We use a single sample of $(n,\vv)$ for the log-determinant estimators for both training and testing.

We train each ResFlow on a single NVIDIA GeForce GTX 2080Ti and each ImpFlow on two cards of NVIDIA GeForce GTX 2080Ti for about 6 days for ResFlows and 8 days for ImpFlows. Although the amount of parameters are the same, ImpFlows need more GPU memory due to the implementation of PyTorch for the backward pass of implicit function.

For the CelebA dataset, we use exactly the same settings as the final version of ResFlows in \citet{chen2019residual}, except that we use the Sine activation of the form as $\frac{1}{2\pi}\sin(2\pi\x)$.

\section{ImpFlow Samples}
\label{appendix:sample}
\label{samples}

\begin{figure}[ht]
	\centering
	\includegraphics[width=.8\linewidth]{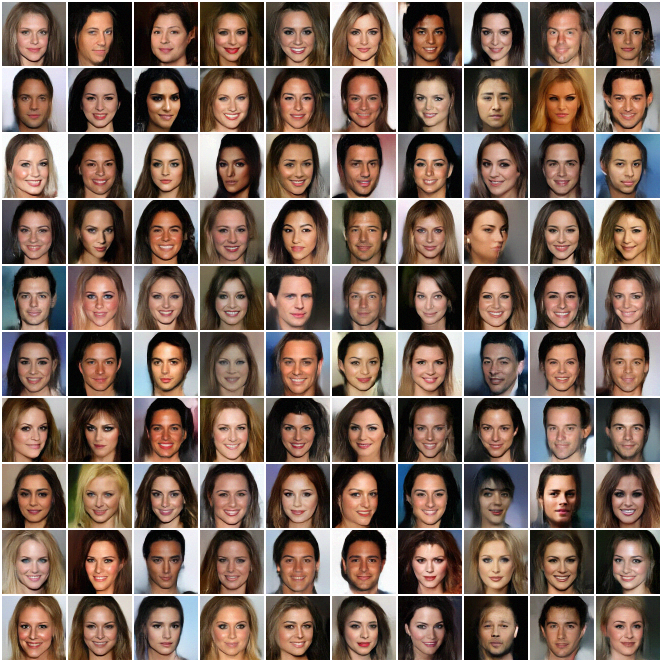}
	\caption{Qualitative samples on 5bit 64$\times$64 CelebA by ImpFlow, with a temperature of $0.8$\citep{kingma2018glow}\label{fig:celeba}}
\end{figure}

\end{document}